
\documentclass[10pt,twocolumn,letterpaper]{article}

\usepackage[pagenumbers]{cvpr} 
%
%


\usepackage[accsupp]{axessibility}

\usepackage{booktabs}
\usepackage{multirow}


\usepackage{ragged2e}

\usepackage{amssymb}
\usepackage{makecell}
\usepackage{multirow}
\usepackage{rotating}
\usepackage{array}
\usepackage{times}
\usepackage{graphicx}
\usepackage{psfrag}
\usepackage{enumitem}
\usepackage{url}
\usepackage{amsmath}
\usepackage{booktabs}
\usepackage{lscape}
\usepackage{verbatim}
\usepackage{overpic}
\usepackage{bbding}

\usepackage{amsthm}

\usepackage{tablefootnote}

\usepackage{array}
\usepackage{makecell}

\usepackage[ruled,linesnumbered]{algorithm2e}
\SetKwComment{Comment}{/* }{ */}

\newtheorem{theorem}{Theorem}

\newtheorem{remark}{Remark}

\usepackage[misc]{ifsym}


%
\definecolor{cvprblue}{rgb}{0.21,0.49,0.74}
\usepackage[pagebackref,breaklinks,colorlinks,citecolor=cvprblue]{hyperref}


\title{Interpreting Object-level Foundation Models via Visual Precision Search\vspace{-16pt}}


\author{
Ruoyu Chen$^{1,2}$, Siyuan Liang$^{3}$, Jingzhi Li$^{1,2,7}$, Shiming Liu$^{4}$, Maosen Li$^{5}$, \\
Zhen Huang$^{6}$, Hua Zhang$^{1,2,*}$, and Xiaochun Cao$^{8,}$\thanks{Corresponding authors.}\\
\normalsize$^{1}$Institute of Information Engineering, Chinese Academy of Sciences, Beijing 100093, China\\
\normalsize$^{2}$School of Cyber Security, University of Chinese Academy of Sciences, Beijing 100049, China\\
\normalsize$^{3}$School of Computing, NUS~~~$^{4}$RAMS Lab, Huawei Technologies Co., Ltd.~~~$^{5}$IAS BU, Huawei Technologies Co., Ltd.\\
\normalsize$^{6}$College of Computer, NUDT~$^{7}$Key Lab. of Edu. Inf. for Nationalities (YNNU), Ministry of Education, Kunming, China\\
\normalsize$^{8}$School of Cyber Science and Technology, Shenzhen Campus of Sun Yat-sen University, Shenzhen 518107, China\\
\small\texttt{chenruoyu@iie.ac.cn}~~~~~~~~~~~~\texttt{pandaliang521@gmail.com}~~~~~~~~~~~~\texttt{\{lijingzhi,zhanghua\}@iie.ac.cn}\\
\small\texttt{\{liushiming3,limaosen2\}@huawei.com}~~~~~~\texttt{huangzhen@nudt.edu.cn}~~~~~~\texttt{caoxiaochun@mail.sysu.edu.cn}\vspace{-24pt}
}


\begin{document}
\maketitle

\begin{abstract}
Advances in multimodal pre-training have propelled object-level foundation models, such as Grounding DINO and Florence-2, in tasks like visual grounding and object detection. However, interpreting these models’ decisions has grown increasingly challenging. Existing interpretable attribution methods for object-level task interpretation have notable limitations: (1) gradient-based methods lack precise localization due to visual-textual fusion in foundation models, and (2) perturbation-based methods produce noisy saliency maps, limiting fine-grained interpretability. To address these, we propose a Visual Precision Search method that generates accurate attribution maps with fewer regions. Our method bypasses internal model parameters to overcome attribution issues from multimodal fusion, dividing inputs into sparse sub-regions and using consistency and collaboration scores to accurately identify critical decision-making regions. We also conducted a theoretical analysis of the boundary guarantees and scope of applicability of our method. Experiments on RefCOCO, MS COCO, and LVIS show our approach enhances object-level task interpretability over SOTA for Grounding DINO and Florence-2 across various evaluation metrics, with faithfulness gains of 23.7\%, 31.6\%, and 20.1\% on MS COCO, LVIS, and RefCOCO for Grounding DINO, and 50.7\% and 66.9\% on MS COCO and RefCOCO for Florence-2. Additionally, our method can interpret failures in visual grounding and object detection tasks, surpassing existing methods across multiple evaluation metrics. The code is released at \url{https://github.com/RuoyuChen10/VPS}.
\end{abstract}

\vspace{-18pt}
\begin{figure}[!t]
    \centering
    \includegraphics[width=0.45\textwidth]{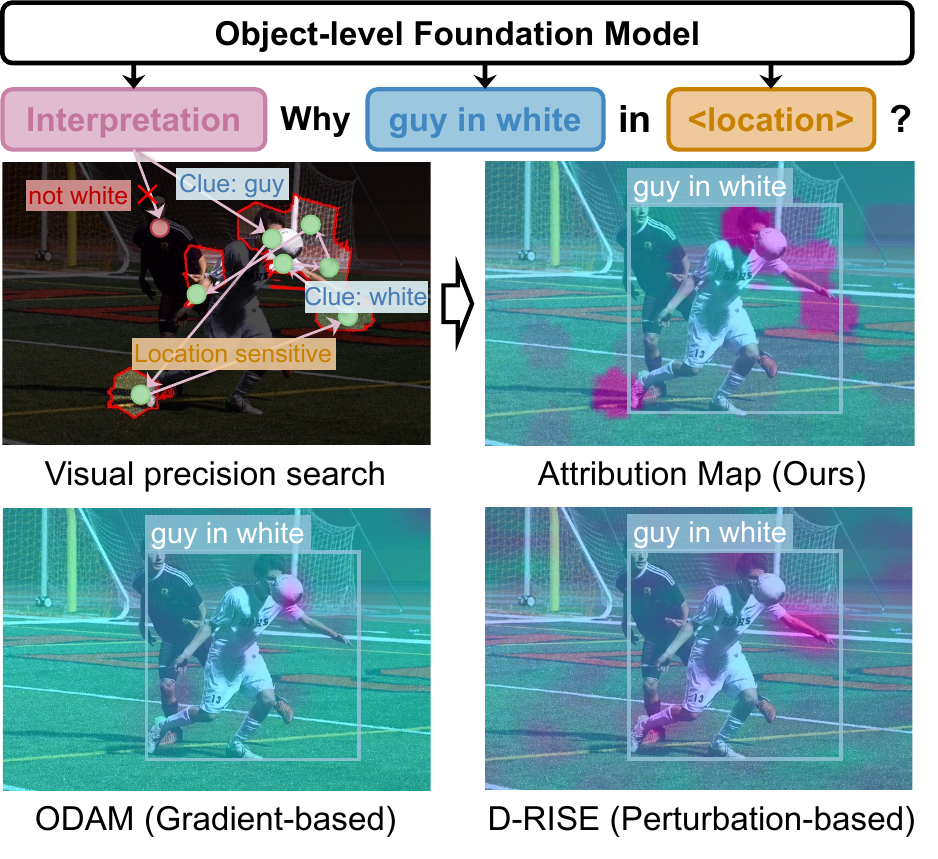}\vspace{-10pt} 
    \caption{Illustration of our Visual Precision Search interpretation method, which more precisely identifies key sub-regions in object-level foundation model decision-making compared to gradient-based and perturbation-based methods.
    }  
    \label{motivation}\vspace{-10pt}
\end{figure}

\section{Introduction}\label{sec:intro}
Understanding object information in images, such as object detection~\cite{he2018mask,cheng2024yolo,liang2024object,zou2023object,zang2024contextual,yao2024detclipv3}, is a crucial and enduring challenge in computer vision, holding significant importance across various fields, including autonomous driving~\cite{wen2024road,chen2024end,hu2023planning}. With advancements in multimodal alignment technology~\cite{radford2021learning,li2022grounded}, object-level foundation models~\cite{wu2024general,liu2023grounding,xiao2024florence,zou2023generalized} like Grounding DINO~\cite{liu2023grounding} and Florence-2~\cite{xiao2024florence} have been developed to handle various tasks, including visual grounding and object detection. However, the massive data volume, rich semantic concepts, and large parameters significantly reduce the transparency~\cite{liang2021parallel,liang2022large,liu2023x} and interpretability~\cite{gandelsman2024interpreting,zhao2024gradient,chen2023sim2word} of these models. Since object-level tasks like autonomous driving demand high model reliability~\cite{feng2021review,wilson2023safe,miller2019evaluating}, building transparent and interpretable models is crucial for enhancing safety and trustworthy~\cite{chen2024less,jiang2024comparing,liang2024object,wei2019transferable,liang2021parallel,liang2022imitated}, making it essential to interpret object-level foundation models.

Some interpretable attribution algorithms for traditional object detection models have been proposed, including gradient-based methods like ODAM~\cite{zhao2024gradient_detector} and perturbation-based methods like D-RISE~\cite{petsiuk2021black}. However, as shown in the lower panel of Figure~\ref{motivation}, these methods face limitations when applied to multimodal foundation models: (1) gradient-based methods may struggle to provide accurate visual localization explanations due to the fusion of visual and textual features in foundation models, and (2) perturbation-based approaches can introduce sampling artifacts, resulting in noisy saliency maps with limited fine-grained interpretability.

To address these issues, we propose a novel interpretation mechanism for the object-level foundation model, as shown in Figure~\ref{motivation}. Our goal is to generate a saliency map that explains the rationale behind the model’s detection of a specific object, allowing fewer regions to be exposed for accurate detection. Additionally, removing these critical regions should quickly lead to detection failure, highlighting their importance in the model’s decision. Specifically, we propose Visual Precision Search, which sparsifies the input region into a series of sub-regions using super-pixel segmentation, and then ranks these sparse sub-regions to determine their importance for object-level decision-making. Our gradient-free method \textit{circumvents localization errors at the visual level that are caused by gradient back-propagation in vision-text fusion}. Furthermore, we propose a novel submodular function, grounded in theoretical analysis and offering boundary guarantees for Visual Precision Search. This function identifies regions that enhance interpretability from two key perspectives: clues that support the model’s accurate detection and regions with strong combinatorial effects. By employing region search to iteratively expand the set of sub-regions, our method more precisely identifies key sub-regions in the decision-making of object-level foundation models, which \textit{mitigates the issue of coarse explanations caused by noise in perturbation-based methods}. In addition, our method effectively analyzes cases of grounding or detection failure, allowing us to observe which input-level factors influence the decision. 

We validated our method on the MS COCO~\cite{lin2014microsoft}, RefCOCO~\cite{kazemzadeh2014referitgame}, and LVIS~\cite{gupta2019lvis} datasets. The object-level foundation models evaluated include Grounding DINO, which uses a multimodal feature fusion architecture, and Florence-2, which employs a multimodal large language model architecture. The tasks covered include interpreting visual grounding and object detection. In explaining the data behind correct model decisions, our method demonstrates that the faithfulness metric for Grounding DINO surpasses SOTA method D-RISE by 23.7\%, 20.1\%, and 31.6\% on MS COCO, RefCOCO, and LVIS, respectively. For Florence-2, our method improves the faithfulness metric by 102.9\% and 66.9\% on MS COCO and RefCOCO, respectively, while also achieving SOTA performance on location metrics across all datasets. In interpreting the factors leading to Grounding DINO’s failures in visual grounding tasks, our method improves the Insertion metric by 42.9\% and the average highest score by 25.1\%. It also identifies causes of detector misclassification and undetection, achieving Insertion metric improvements of 54.7\% and 42.7\% for misclassification, and 36.7\% and 64.3\% for undetection on MS COCO and LVIS, respectively.

In summary, the contributions of this paper are:
\begin{itemize}
    \item We introduce Visual Precision Search, a new search-based mechanism for interpreting object-level foundation models through instance-specific saliency maps.
    \item A novel submodular mechanism is constructed to enhance interpretability from two aspects: identifying clues that support accurate detection and highlighting regions with strong combinatorial effects, with an analysis of its theoretical boundaries for object-level tasks.
    \item We demonstrate our approach’s generalizability across multimodal foundation models, including both non-LLM-based (Grounding DINO) and LLM-based detectors (Florence-2).
    \item We validate our method on MS COCO, RefCOCO, and LVIS, achieving significant improvements in explaining object-level tasks. Additionally, we analyze grounding and detection failures, establishing a quantitative benchmark for this purpose.
\end{itemize}
\section{Related Work}
\label{sec:related}

\textbf{Object Detection.} 
Object detection involves locating objects in images and identifying their categories~\cite{liang2024object}. Early models typically used convolutional neural network (CNN) backbones, divided into two main types: two-stage and single-stage models. Two-stage methods, like Faster R-CNN~\cite{ren2016faster} and Mask R-CNN~\cite{he2018mask}, generate region proposals to identify potential foreground objects before refining localization and classification. In contrast, single-stage methods, such as YOLO~\cite{wang2023yolov7} and FCOS~\cite{tian2020fcos}, directly classify and regress on backbone features. Transformer architectures~\cite{vaswani2017attention} have further advanced object detection, as exemplified by DETR~\cite{carion2020end}, which uses a fully end-to-end Transformer-based approach. The rise of multimodal foundation models~\cite{radford2021learning,li2022grounded} has enabled open-set detection tasks~\cite{liang2024object}, allowing object grounding with textual input~\cite{wu2024towards,cheng2024yolo}. GLIP~\cite{li2022grounded} frames detection as a grounding problem, integrating vision and language through feature fusion in the neck module. Grounding DINO~\cite{liu2023grounding} uses multi-phase vision-language fusion, achieving state-of-the-art (SOTA) referring expression comprehension (REC)~\cite{liu2017referring}. Other methods explore multimodal large language models (MLLMs) for detection and grounding tasks. Florence-2~\cite{xiao2024florence}, for instance, introduces a prompt-based model that generates object categories and coordinates using text prompts directly. The diverse feature extractors, architectures, and outputs in object detection make it challenging to interpret their decisions using a single unified approach.

\textbf{Object Detection Explanation.} 
Explaining object detector decisions remains a largely unexplored area. Gudovskiy \textit{et al.}~\cite{gudovskiy2018explain} used Integrated Gradients (IG)~\cite{sundararajan2017axiomatic} and SHAP~\cite{scott2017unified} with SSD~\cite{liu2016ssd} classification scores for box-level attribution. Lee \textit{et al.}~\cite{lee2021bbam} optimized masks on Mask R-CNN~\cite{he2018mask} to approximate original decisions with minimal pixels. Petsiuk \textit{et al.}~\cite{petsiuk2021black} proposed D-RISE, an adaptation of RISE~\cite{petsiuk2018rise}, though it often introduces noise in saliency maps. Jiang \textit{et al.}~\cite{jiang2023diverse} introduced Nesterov-Accelerated iGOS++, which can be resource-intensive for large models. Zhao \textit{et al.}~\cite{zhao2024gradient_detector} developed ODAM using Grad-CAM~\cite{selvaraju2020grad}, but its effectiveness depends on network layer choices. SSGrad-CAM++~\cite{yamauchi2024spatial} generates saliency maps based on Grad-CAM++~\cite{chattopadhay2018grad}. VX-CODE~\cite{yamauchi2024explaining} attributes important patches based on greedy search and SHAP~\cite{scott2017unified}. While these methods aim to interpret detector errors, they largely rely on simple visualizations and empirical observations, often overlooking input-level feature confusion that can obscure human understanding. Masking confusing regions may aid in correcting model outputs~\cite{chen2024less}. Our paper introduces a black-box interpretation approach for object-level foundation models using Visual Precision Search, enhancing detection performance with fewer regions and identifying input-level failure causes to improve detection accuracy.

\section{Method}
\label{sec:method} 

This section presents our method for explaining the decisions of object-level foundation models. 
Figure~\ref{framework_figure} illustrates the overall framework of our method.

\begin{figure*}
    \vspace{-30pt}
    \centering
    \includegraphics[width=\textwidth]{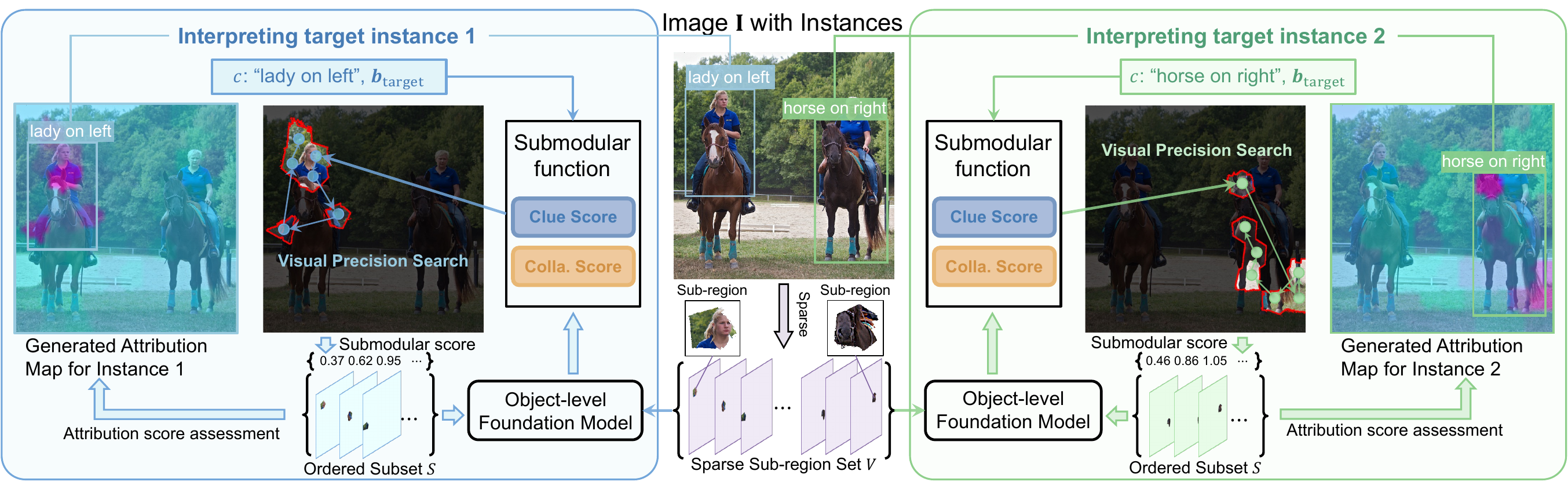}\vspace{-10pt} 
    \caption{Framework of the proposed Visual Precision Search method for interpreting an object-level foundation model. The input is first sparsified into a set of sub-regions and then interpreted across different instances. A submodular function guides the search for significant sub-regions, updating the ordered subset iteratively, and ultimately generating the instance-level attribution map.
    } 
    \label{framework_figure}
    \vspace{-15pt}
\end{figure*}

\subsection{Problem Formulation}\label{problem_formulation}

Given an image $\mathbf{I} \in \mathbb{R}^{h \times w \times 3}$ and an object-level foundation model $f(\cdot)$, the output can be represented as $f(\mathbf{I}) = \{(\boldsymbol{b}_i, c_i, s_{i}) \mid i = 1, 2, \ldots, N\}$. Each tuple $(\boldsymbol{b}_i, c_i, s_i)$ denotes the bounding box, class label, and confidence score for each detected object $i$, where $N$ is the maximum number of detection boxes. Tasks like object detection involve multiple boxes and categories, while visual grounding tasks like referring expression comprehension (REC) have only one box and category.
Our goal is to generate a saliency map that explains the reasons behind the model’s detection of a specific object, $(\boldsymbol{b}_i, c_i, s_i)$. Ideally, this explanation would enable the correct detection of objects using fewer input regions, while also requiring the removal of only a few critical regions to invalidate the detector. To achieve this, we can sparsify the input region $V = \{\mathbf{I}^{s}_1, \ldots, \mathbf{I}^{s}_m \}$, where $\mathbf{I}^{s}_i$ represents the $i$-th sub-region, $m$ is the total number of sub-regions, and $\mathbf{I} = \sum_{i=1}^{m} \mathbf{I}^{s}_i$. We then frame the saliency map generation task as a subset selection problem for these input sub-regions. A set function $\mathcal{F}(\cdot)$ is defined to assess interpretability by determining whether a given region is a key factor in the model’s decision. Therefore, the objectives are:
\begin{equation}\label{problem_objective}
    \max_{S \subseteq V, |S|<k}\mathcal{F}(S),
\end{equation}
where $k$ denotes the maximum number of sub-regions. Therefore, the key to this problem lies in designing set function $\mathcal{F}$ and optimizing Eq.~\ref{problem_objective}.

\subsection{Visual Precision Search}\label{proposed_method}

We propose a Visual Precision Search method for interpreting object-level models. To begin, the input region needs to be sparsified. We apply the SLICO superpixel segmentation algorithm~\cite{achanta2012slic} to divide the input into $m$ sub-regions, $V = \{\mathbf{I}^{s}_1, \ldots, \mathbf{I}^{s}_m \}$. To solve Eq.~\ref{problem_objective}, an $\mathcal{NP}$-hard problem, we employ submodular optimization~\cite{fujishige2005submodular}. Since the saliency map requires all sub-regions to be ranked, $k$ can be set to $|V|$ to compute ordered subsets. When the set function $\mathcal{F}(\cdot)$ satisfies the properties of diminishing returns\footnote{Diminishing returns: $\forall S_{A} \subseteq S_{B} \subseteq V \setminus \alpha, \; \mathcal{F}(S_{A} \cup \{\alpha\}) - \mathcal{F}(S_{A}) \ge \mathcal{F}(S_{B} \cup \{\alpha\}) - \mathcal{F}(S_{B})$.} and monotonic non-negative\footnote{Monotonic non-negative: $\forall S \subseteq V \setminus \alpha, \; \mathcal{F}(S \cup \{\alpha\}) - \mathcal{F}(S) \ge 0$.}, a greedy search guarantees an approximate optimal solution~\cite{edmonds1970submodular}, with $\mathcal{F}(S) \ge (1 - 1/e)\mathcal{F}(S^{\ast})$, where $S^{\ast}$ denotes the optimal solution and $S$ denotes the greedy solution. We next design a set function to evaluate the interpretability score and rank the importance of interpretable regions for object-level tasks.

\textbf{Clue Score:} An essential aspect of interpretability is enabling the object-level foundation model to accurately locate and identify objects while using fewer regions. To assess the importance of subregions from this perspective, we introduce the object location box information, $\boldsymbol{b}_{\text{target}}$, and the target category, $c$, that requires explanation. Given a subregion $S$, the object-level model outputs $N$ instances, denoted as $f(S) = \{(\boldsymbol{b}_i, s_{c,i}) \mid i=1, \ldots, N\}$, where $s_{c,i}$ represents the confidence of the $i$-th bounding box in predicting category $c$. Then, the clue score of sub-region $S$ is defined as:
\begin{equation}\label{clue_score}
    s_{\text{clue}}(S,\boldsymbol{b}_{\text{target}},c) = \max_{(\boldsymbol{b}_i, s_{c,i})\in f(S)}{\text{IoU}(\boldsymbol{b}_{\text{target}},\boldsymbol{b}_i)\cdot s_{c,i}},
\end{equation}
where $\text{IoU}(\cdot,\cdot)$ represents the Intersection over Union between two bounding boxes. Unlike Petsiuk \textit{et al.}~\cite{petsiuk2021black}, which only considers high-confidence bounding boxes, our method includes all candidate boxes to avoid getting stuck in a local optimum during the search. By incorporating the clue score $s_{\text{clue}}$, our method focuses on regions that strengthen the desired positional and semantic response.

\textbf{Collaboration Score:} Some regions may exhibit strong combination effects, meaning they contribute effectively to model decisions only when paired with multiple specific sub-regions. Therefore, we introduce the collaboration score $s_{\text{colla.}}$ to assess sub-regions with high sensitivity to decision outcomes:
\begin{equation}\label{colla_score}
    s_{\text{colla.}}(S,\boldsymbol{b}_{\text{target}},c) = 1 - \max_{(\boldsymbol{b}_i, s_{c,i})\in f(V \setminus S)}{\text{IoU}(\boldsymbol{b}_{\text{target}},\boldsymbol{b}_i)\cdot s_{c,i}},
\end{equation}
this means that no detection box can accurately or confidently locate the object once the key sub-region is removed. This score serves as an effective guide in the initial search phase to identify subtle key regions that contribute to the final decision.

\textbf{Submodular Function:} The scores above are combined to construct a submodular function $\mathcal{F}(\cdot)$, as follows:
\begin{equation}\label{submodular_function}
    \mathcal{F}(S,\boldsymbol{b}_{\text{target}},c) = s_{\text{clue}}(S,\boldsymbol{b}_{\text{target}},c) + s_{\text{colla.}}(S,\boldsymbol{b}_{\text{target}},c).
\end{equation}

\textbf{Saliency Map Generation:} Using the above submodular function, a greedy search algorithm is applied to sort all sub-regions in $V$, yielding an ordered subset $S$. Introducing the submodular function enables the search algorithm to more precisely identify key visual regions for interpretation. Additionally, scoring the sub-regions is necessary to better explain the importance of each sub-region. We evaluate the salient difference between the two sub-regions by the marginal effect. The attribution score $\mathcal{A}_i$ for each sub-region $s_i$ in $S$ is assessed by:
\begin{equation}
    \mathcal{A}_i = \begin{cases}
     b_{\text{base}} & \text{if } i = 1, \\
     \mathcal{A}_{i-1} - \left| \mathcal{F}(S_{[i]}) - \mathcal{F}(S_{[i-1]}) \right| & \text{if } i>1,
    \end{cases}
\end{equation}
where $b_{\text{base}}$ represents a baseline attribution score (such as 0 and won't affect results) for the first sub-region, and $S_{[i]}$ denotes the set containing the top $i$ sub-regions in $S$. When a new sub-region is added, a small marginal increase suggests comparable importance to the previous sub-region. A negative marginal effect indicates a counterproductive impact, which can be assessed by its absolute value. Finally, $\mathcal{A}$ is normalized to obtain the saliency map of the sub-region. The detailed calculation process of the proposed Visual Precision Search algorithm is outlined in Algorithm~\ref{alg:vps}.

\begin{algorithm}[]
    \footnotesize 
    \caption{Visual Precision Search algorithm for generating interpretable saliency maps of object-level foundation models.}
    \label{alg:vps}
    \KwIn{Image $\mathbf{I} \in \mathbb{R}^{h \times w \times 3}$, a division algorithm $\texttt{Div}(\cdot)$, the object location box $\boldsymbol{b}_{\text{target}}$, and the target category $c$.}
    \KwOut{An ordered set $S$ and a saliency map $\mathcal{A} \in \mathbb{R}^{h \times w}$.}
    $V \gets \texttt{Div}(\mathbf{I})$\;
    $S \gets \varnothing$ \Comment*[r]{Initiate the operation of submodular subset selection}
    $\mathcal{A}_1 \gets b_{\text{base}}$\;
    \For{$i=1$ \KwTo $|V|$}{
        $S_d \gets V \backslash S$\;
        $\alpha \gets  \arg{\max_{\alpha \in S_d}{\mathcal{F} \left( S \cup \{ \alpha \},\boldsymbol{b}_{\text{target}},c \right) }}$\;
        $S \gets S \cup \{ \alpha \}$\;
        \uIf {$i>1$}{
            $\mathcal{A}_i \gets \mathcal{A}_{i-1} - \left| \mathcal{F}(S_{[i]}) - \mathcal{F}(S_{[i-1]}) \right|$
        }
    }
    \Return $S,\text{norm}(\mathcal{A})$
\end{algorithm}

\subsection{Theory Analysis}\label{theory_analysis}

A theoretical analysis is conducted for our Visual Precision Search method. 

\begin{theorem}[Submodular Properties]\label{submodular_properties}
Consider two sub-sets $S_{A}$ and $S_{B}$ in set $V$, where $S_{A} \subseteq S_{B} \subseteq V$. Given an element $\alpha$, where $\alpha \in V \setminus S_{b}$. Assuming that $\alpha$ is contributing to model interpretation,
then, the function $\mathcal{F}(\cdot)$ in Eq.~\ref{submodular_function} satisfies diminishing returns,
$$
\mathcal{F}(S_{A} \cup \{\alpha\}) - \mathcal{F}(S_{A}) \ge \mathcal{F}(S_{B} \cup \{\alpha\}) - \mathcal{F}(S_{B}),
$$
and monotonic non-negative, $\mathcal{F}(S_A \cup \{\alpha\}) - \mathcal{F}(S) \ge 0$, and thus, $\mathcal{F}$ is a submodular function.
\end{theorem}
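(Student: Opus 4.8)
The plan is to exploit the additive structure $\mathcal{F} = s_{\text{clue}} + s_{\text{colla.}}$: since a finite sum of submodular, monotone non-negative set functions is again submodular and monotone non-negative, it suffices to establish both properties for $s_{\text{clue}}$ and $s_{\text{colla.}}$ separately and then add. I would introduce the shorthand $g(T) = \max_{(\boldsymbol{b}_i,s_{c,i}) \in f(T)} \text{IoU}(\boldsymbol{b}_{\text{target}},\boldsymbol{b}_i)\cdot s_{c,i}$, so that $s_{\text{clue}}(S) = g(S)$ and $s_{\text{colla.}}(S) = 1 - g(V \setminus S)$, and record the boundedness $0 \le g(T) \le 1$ that follows from $\text{IoU} \in [0,1]$ and $s_{c,i} \in [0,1]$.

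First I would dispatch monotone non-negativity, which is the easier half. Writing the marginal of $\mathcal{F}$ as $[g(S \cup \{\alpha\}) - g(S)] + [g(V \setminus S) - g(V \setminus (S \cup \{\alpha\}))]$, I would invoke the hypothesis that $\alpha$ contributes to the interpretation: exposing the extra sub-region $\alpha$ can only improve the best target-consistent detection, giving $g(S \cup \{\alpha\}) \ge g(S)$, while deleting $\alpha$ from the complement can only weaken the best detection there, giving $g(V \setminus S) \ge g(V \setminus (S \cup \{\alpha\}))$. Both bracketed terms are therefore non-negative, so each marginal of $\mathcal{F}$ is non-negative.

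The substantive step is diminishing returns. For the clue term I would show $g(S_A \cup \{\alpha\}) - g(S_A) \ge g(S_B \cup \{\alpha\}) - g(S_B)$ for $S_A \subseteq S_B$: since $g$ is monotone and capped at $1$, the best detection already attainable from the larger exposed set $S_B$ leaves less headroom for $\alpha$ to raise the maximum than it does for the smaller set $S_A$, so the marginal gain shrinks as the exposed set grows. For the collaboration term the relevant quantity is $g(V \setminus S) - g(V \setminus (S \cup \{\alpha\}))$; here enlarging $S$ from $S_A$ to $S_B$ shrinks the complement, so by the same saturation-of-the-maximum reasoning the best detection surviving in the complement is already small and removing $\alpha$ perturbs it less, again yielding a smaller marginal. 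Summing the two diminishing-returns inequalities gives the claim for $\mathcal{F}$.

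The main obstacle is that $g$ is the output of a black-box detector and does not enjoy algebraic submodularity for free; in fact a purely set-theoretic argument is delicate, because if one tried to force $g$ itself to be submodular the complement term $1 - g(V\setminus S)$ would tend to become supermodular. The work therefore has to be carried by the contribution hypothesis together with the boundedness of $g$: these must be used to pin down the orderings of the competing $\max$ terms and to justify the saturation behaviour that drives both marginals downward as the exposed (respectively complementary) region grows. Making that saturation precise — i.e. stating exactly what monotone, diminishing response of the detector's target score to the exposed sub-regions is being assumed — is the crux, and I would state it as an explicit standing assumption before running the two marginal comparisons above.
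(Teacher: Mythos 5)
Your overall architecture --- split $\mathcal{F}$ into $s_{\text{clue}}(S)=g(S)$ and $s_{\text{colla.}}(S)=1-g(V\setminus S)$, establish monotone non-negativity and diminishing returns for each piece, then use closure of submodularity under sums --- is exactly the paper's, and your monotonicity half is sound under the ``$\alpha$ is contributing'' hypothesis (the paper gets the same conclusion by writing each marginal as $\max\bigl(0,\max\nabla f_{\text{reg}}\cdot\nabla f_{\text{cls}}\cdot\alpha^{2}\bigr)\ge 0$). The genuine gap is in the diminishing-returns step. Your ``headroom'' argument --- $g$ is monotone and capped at $1$, so a larger exposed set leaves less room for $\alpha$ to raise the maximum --- is not valid: boundedness only gives $g(S_B\cup\{\alpha\})-g(S_B)\le 1-g(S_B)$, which does not compare against $g(S_A\cup\{\alpha\})-g(S_A)$; the latter can be zero while the former is large, precisely when $\alpha$ helps only in combination with elements of $S_B\setminus S_A$. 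That combinatorial behaviour is exactly what the collaboration score was introduced to capture, so it cannot be waved away. Worse, as you yourself observe, the two halves pull in opposite directions: submodularity of $s_{\text{clue}}$ asks $g$ to have diminishing marginals on growing sets, while submodularity of $s_{\text{colla.}}$ (whose marginal at $S$ is $g(V\setminus S)-g\bigl((V\setminus S)\setminus\{\alpha\}\bigr)$, a marginal of $g$ taken on the complement, which \emph{shrinks} as $S$ grows) asks $g$ for \emph{increasing} marginals there, i.e.\ supermodular-type behaviour; both hold simultaneously only under extra structure. Your proposal flags this tension and defers it to an unstated ``standing assumption,'' but never states or discharges it, so the crux of the theorem is left unproved.

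The paper closes exactly this gap with a concrete (heuristic but explicit) mechanism that your proposal lacks: it linearizes the detector via a first-order Taylor decomposition, $f(S_A+\alpha)\approx f(S_A)+\nabla f(S_A)\cdot\alpha$, treats the boxes created (or destroyed) by adding (removing) $\alpha$ as incremental candidate outputs $\boldsymbol{b}^{\ast}$, $\boldsymbol{s}^{\ast}_c$, assumes $\nabla f_{\text{reg}}>0$ and $\nabla f_{\text{cls}}>0$ for a contributing $\alpha$ (which yields the non-negative marginals), and then invokes a gradient-saturation assumption in the spirit of Integrated Gradients: when $S_B\supseteq S_A$ already contains positive evidence, $\nabla f_{\text{cls}}(S_B)$ is attenuated or vanishes, so $\nabla f_{\text{cls}}(S_A)>\nabla f_{\text{cls}}(S_B)$, and symmetrically on complements $\nabla f_{\text{reg}}(V\setminus S_A)>\nabla f_{\text{reg}}(V\setminus S_B)$. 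It is this saturation of gradients --- not monotonicity plus boundedness of $g$ --- that simultaneously produces the diminishing-returns inequality for the clue score and for the collaboration score. To repair your proof you would need to state that assumption (or an equivalent explicit hypothesis on how the detector's target response saturates) and run both marginal comparisons through it; without it, the claimed inequalities simply do not follow.
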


\begin{proof}
Please see supplementary material for the proof.
\end{proof}

\begin{remark}[Impact on Sparse Division]
The quality of the search space is determined by sparse division, meaning that both the method of partitioning the input and the number of sub-regions play a crucial role in the faithfulness of the Visual Precision Search.
\end{remark}

\begin{remark}[Applicable Models]
If the model provides bounding box positions and category confidence without filtering out low-confidence detection boxes, the interpretation results of the Visual Precision Search can achieve guaranteed optimal boundaries. Models with detection heads are generally applicable; however, MLLM-based models may not directly output confidence scores, leaving room for improvement in the interpretation results.
\end{remark}

\section{Experiments}
\label{sec:experiment}


\begin{table*}[!t]
    \vspace{-30pt}
    \caption{Evaluation of faithfulness metrics (Deletion, Insertion AUC scores, and average highest score) and location metrics (Point Game and Energy Point Game) on the MS-COCO, RefCOCO, and LVIS V1 (rare) validation sets for correctly detected or grounded samples using Grounding DINO.}\vspace{-18pt}
    \label{faithfulness_on_ms_coco}
    \begin{center}
        \resizebox{\textwidth}{!}{
            \begin{tabular}{ll|ccccccc|cc}
                \toprule
                \multirow{2}{*}{Datasets} & \multirow{2}{*}{Methods}  & \multicolumn{7}{c}{Faithfulness Metrics} & \multicolumn{2}{c}{Location Metrics} \\ \cmidrule(lr){3-9} \cmidrule(l){10-11}
                & & Ins. ($\uparrow$) & Del. ($\downarrow$) & Ins. (class) ($\uparrow$) & Del. (class) ($\downarrow$) & Ins. (IoU) ($\uparrow$) & Del. (IoU) ($\downarrow$) & Ave. high. score ($\uparrow$) & Point Game ($\uparrow$) & Energy PG ($\uparrow$) \\ \midrule
                \multirow{6}{*}{\begin{tabular}[c]{@{}l@{}}MS COCO~\cite{lin2014microsoft}\\ (Detection task)\end{tabular}} & Grad-CAM~\cite{selvaraju2020grad} & 0.2436 & 0.1526 & 0.3064 & 0.2006 & 0.6229 & 0.5324 & 0.5904 & 0.1746 & 0.1463 \\
                & SSGrad-CAM++~\cite{yamauchi2024spatial} & 0.2107 & 0.1778 & 0.2639 & 0.2314 & 0.5981 & 0.5511 & 0.5886 & 0.1905 & 0.1293 \\
                & D-RISE~\cite{petsiuk2021black} & 0.4412 & 0.0402 & 0.5081 & 0.0886 & 0.8396 & 0.3642 & 0.6215 & 0.9497 & 0.1850 \\
                & D-HSIC~\cite{novello2022making} & 0.3776 & 0.0439 & 0.4382 & 0.0903 & 0.8301 & 0.3301 & 0.5862 & 0.7328 & 0.1861 \\
                & ODAM~\cite{zhao2024gradient_detector} & 0.3103 & 0.0519 & 0.3655 & 0.0894 & 0.7869 & 0.3984 & 0.5865 & 0.5431 & 0.2034\\
                & \cellcolor[HTML]{EFEFEF}Ours & \cellcolor[HTML]{EFEFEF}\textbf{0.5459} & \cellcolor[HTML]{EFEFEF}\textbf{0.0375} & \cellcolor[HTML]{EFEFEF}\textbf{0.6204} & \cellcolor[HTML]{EFEFEF}\textbf{0.0882} & \cellcolor[HTML]{EFEFEF}\textbf{0.8581} & \cellcolor[HTML]{EFEFEF}\textbf{0.3300} & \cellcolor[HTML]{EFEFEF}\textbf{0.6873} & \cellcolor[HTML]{EFEFEF}\textbf{0.9894} & \cellcolor[HTML]{EFEFEF}\textbf{0.2046} \\ 
                \midrule
                \multirow{6}{*}{\begin{tabular}[c]{@{}l@{}}RefCOCO~\cite{kazemzadeh2014referitgame}\\ (REC task)\end{tabular}} & Grad-CAM~\cite{selvaraju2020grad} & 0.3749 & 0.4237 & 0.4658 & 0.5194 & 0.7516 & 0.7685 & 0.7481 & 0.2380 & 0.2171 \\
                & SSGrad-CAM++~\cite{yamauchi2024spatial} & 0.4113 & 0.3925 & 0.5008 & 0.4851 & 0.7700 & 0.7588 & 0.7561 & 0.2820 & 0.2262 \\
                & D-RISE~\cite{petsiuk2021black} & 0.6178 & 0.1605 & 0.7033 & 0.3396 & 0.8606 & 0.5164 & 0.8471 & 0.9400 & 0.2870 \\
                & D-HSIC~\cite{novello2022making} & 0.5491 & 0.1846 & 0.6295 & 0.3509 & 0.8504 & 0.5120 & 0.7739 & 0.7900 & 0.3190 \\
                & ODAM~\cite{zhao2024gradient_detector} & 0.4778 & 0.2718 & 0.5620 & 0.3757 & 0.8217 & 0.6641 & 0.7425 & 0.6320 & 0.3529 \\
                & \cellcolor[HTML]{EFEFEF}Ours & \cellcolor[HTML]{EFEFEF}\textbf{0.7419} & \cellcolor[HTML]{EFEFEF}\textbf{0.1250} & \cellcolor[HTML]{EFEFEF}\textbf{0.8080} & \cellcolor[HTML]{EFEFEF}\textbf{0.2457} & \cellcolor[HTML]{EFEFEF}\textbf{0.9050} & \cellcolor[HTML]{EFEFEF}\textbf{0.5103} & \cellcolor[HTML]{EFEFEF}\textbf{0.8842} & \cellcolor[HTML]{EFEFEF}\textbf{0.9460} & \cellcolor[HTML]{EFEFEF}\textbf{0.3566} \\
                \midrule
                \multirow{6}{*}{\begin{tabular}[c]{@{}l@{}}LVIS V1 (rare)~\cite{gupta2019lvis}\\ (Zero-shot det. task)\end{tabular}} & Grad-CAM~\cite{selvaraju2020grad} & 0.1253 & 0.1294 & 0.1801  & 0.1814 & 0.5657 & 0.5910 & 0.3549 & 0.1151 & 0.0941 \\
                & SSGrad-CAM++~\cite{yamauchi2024spatial} & 0.1253 & 0.1254 & 0.1765 & 0.1775 & 0.5800 & 0.5691 & 0.3504 & 0.1091 & 0.0931 \\
                & D-RISE~\cite{petsiuk2021black} & 0.2808 & 0.0289 & 0.3348 & 0.0835 & 0.8303 & 0.3174 & 0.4289 & 0.9697 & 0.1462 \\
                & D-HSIC~\cite{novello2022making} & 0.2417 & 0.0353 & 0.2912 & 0.0928 & 0.8187 & 0.3550 & 0.4044 & 0.8303 & 0.1730 \\
                & ODAM~\cite{zhao2024gradient_detector} & 0.2009 & 0.0410 & 0.2478 & 0.0844 & 0.7760 & 0.4082 & 0.3694 & 0.6061 & \textbf{0.2050} \\
                & \cellcolor[HTML]{EFEFEF}Ours & \cellcolor[HTML]{EFEFEF}\textbf{0.3695} & \cellcolor[HTML]{EFEFEF}\textbf{0.0277} & \cellcolor[HTML]{EFEFEF}\textbf{0.4275} & \cellcolor[HTML]{EFEFEF}\textbf{0.0799} & \cellcolor[HTML]{EFEFEF}\textbf{0.8479} & \cellcolor[HTML]{EFEFEF}\textbf{0.3242} & \cellcolor[HTML]{EFEFEF}\textbf{0.4969} & \cellcolor[HTML]{EFEFEF}\textbf{0.9758} & \cellcolor[HTML]{EFEFEF}0.1785 \\
                \bottomrule
                \end{tabular}
        }
    \end{center}
    \vspace{-16pt}
\end{table*}

\begin{figure*}[h]
    \centering
    \includegraphics[width=\textwidth]{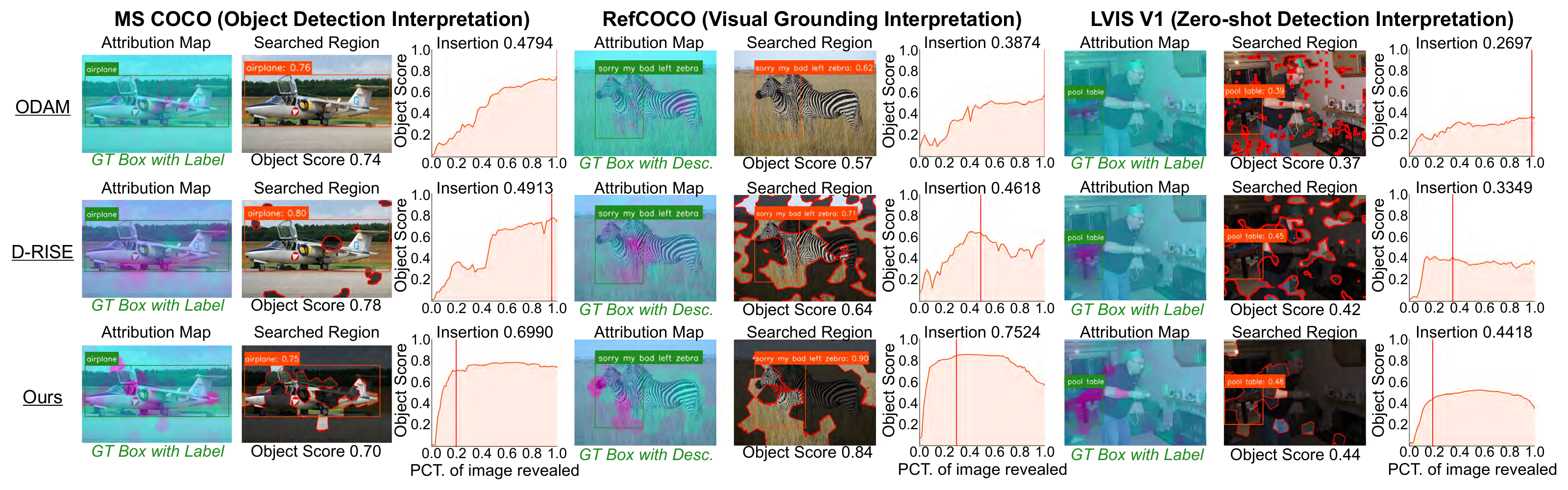}\vspace{-10pt} 
    \caption{Visualization results of Grounding DINO for interpreting object localization tasks on the MS COCO, RefCOCO, and LVIS datasets. 
    The first column shows the saliency map with the ground truth box and label. The second column presents detection within the limited search region, and the third columns display the insertion curves.
    } 
    \label{vis:groundingdino}
    \vspace{-10pt}
\end{figure*}

\subsection{Experimental Setup}\label{experimental_setup}

\textbf{Datasets.} We evaluate the proposed method on three object-level datasets: MS COCO 2017~\cite{lin2014microsoft}, LVIS V1~\cite{gupta2019lvis}, and RefCOCO~\cite{kazemzadeh2014referitgame}. The MS COCO dataset, used for interpreting object detection with 80 classes, involves random sampling of 5 correctly detected (IoU $> 0.5$, correct category), 3 misclassified (IoU $> 0.5$, wrong category), and 3 undetected (IoU $< 0.5$ or low confidence) instances per class. The LVIS V1 dataset, with 1,203 categories including 337 rare ones (1–10 images), uses Grounding DINO for interpreting zero-shot detection on rare classes, analyzing 165 correctly detected, 152 misclassified, and 534 undetected samples. The RefCOCO dataset, for interpreting referring expression comprehension (REC), evaluates 500 correctly grounded (IoU $> 0.5$) and 200 incorrectly grounded (IoU $< 0.5$) samples per model.

\textbf{Baselines.} The interpretation methods we compare include gradient-based approaches: Grad-CAM~\cite{selvaraju2020grad}, SSGrad-CAM++\cite{yamauchi2024spatial}, and ODAM\cite{zhao2024gradient_detector}, as well as perturbation-based methods: D-RISE~\cite{petsiuk2021black} and D-HSIC~\cite{novello2022making}. D-RISE and D-HSIC are allowed to use all model outputs, including low-confidence detection boxes, to enhance the sufficiency of saliency estimation.

\textbf{Implementation Details.} We validate our method on two object-level foundation models, Grounding DINO~\cite{liu2023grounding} and Florence-2~\cite{xiao2024florence}, across object detection and referring expression comprehension (REC) tasks. To interpret model decisions, we set the ground truth bounding box and category as target box $\boldsymbol{b}_{\text{target}}$ and category $c$. By default, the SLICO superpixel segmentation algorithm divides the image into 100 sparse sub-regions.

\subsection{Evaluation Metrics}\label{evaluation}


\textbf{Faithfulness Metrics.} We evaluate the faithfulness of interpretation results across the entire image using Insertion and Deletion scores from the detection task, following the setup by Petsiuk \textit{et al.}~\cite{petsiuk2021black}. These metrics reflect location and recognition accuracy. We also calculate classification and IoU scores for the box closest to the target explanation denoted as Insertion (class), Deletion (class), Insertion (IoU), and Deletion (IoU). Additionally, we compute the average highest confidence, defined as the highest class response for the box closest to the target with IoU $> 0.5$. For failure samples, we introduce the Explaining Successful Rate (ESR) to evaluate whether the saliency map can effectively guide the search within a limited region, enabling the model to correctly detect misclassified or low-confidence objects (with IoU $> 0.5$ and classification confidence above the detector’s default threshold) previously.

\textbf{Location Metrics.} We report the Point Game~\cite{zhang2018top} and Energy Point Game~\cite{wang2020score} metrics. Since these metrics typically assume that the model’s decision is relevant to the target object~\cite{petsiuk2021black}, we calculate them only for interpretation results on samples the model has correctly detected.

\subsection{Faithfulness Analysis}\label{faithfulness_analysis}


\subsubsection{Faithfulness on Grounding DINO}

We begin by validating our approach in explaining decisions across various object-level tasks using Grounding DINO~\cite{liu2023grounding}. Table~\ref{faithfulness_on_ms_coco} presents the faithfulness results across different datasets and tasks, demonstrating the strong interpretative faithfulness of our method.

In the object detection interpretation task on the MS COCO dataset, our method achieved state-of-the-art results, outperforming the D-RISE method by 23.7\% on the Insertion metric, 6.7\% on the Deletion metric, and 10.6\% on the average highest score. The Insertion and Deletion metrics also achieved top performance under both category and IoU conditions. Additionally, our method performed best in location metrics, indicating state-of-the-art performance in explaining general object detection tasks. We also found that gradient-based methods, such as ODAM, exhibit low faithfulness in explaining Grounding DINO decisions. This is primarily due to the fusion of text and visual modalities, which causes the gradient to be influenced by both, reducing its effectiveness in attributing to the visual modality alone.

In the referring expression comprehension (REC) interpretation task on the RefCOCO dataset, our method achieved state-of-the-art results across all evaluation metrics, surpassing the D-RISE method by 20.1\% on the Insertion metric, 22.1\% on the Deletion metric, and 4.4\% on the average highest score. These results confirm that our method maintains high faithfulness in explaining object description understanding.


We also interpret zero-shot object detection for rare classes on the LVIS V1 dataset. Our method achieved SOTA results in faithfulness metrics, surpassing D-RISE by 31.6\% on the Insertion metric, 4.2\% on the Deletion metric, and 15.9\% on the average highest score. Although our method ranks second to ODAM on Energy Point Game metrics, this is mainly due to Grounding DINO not being fully trained on rare LVIS classes; as a result, model responses may be influenced by other regions rather than the object.

Experimental results show that our method significantly outperforms other interpretability approaches, primarily due to the submodularity of our objective function, enabling precise evaluation of critical sub-regions. As shown in Figure~\ref{vis:groundingdino}, ODAM’s saliency maps are diffuse, and D-RISE’s maps are noisy, whereas our method sharply highlights essential sub-regions, capturing edges and class-specific features that enhance interpretability.


\subsubsection{Faithfulness on Florence-2}



We further validated our approach on Florence-2~\cite{xiao2024florence}, a foundation model lacking object confidence scores. This limitation requires our search to rely solely on IoU as a guiding metric, excluding comparison with gradient-based methods. As shown in Table~\ref{faithfulness_on_florence-2}, our method achieves state-of-the-art performance.

In the object detection interpretability task on the MS COCO dataset, our method outperforms D-RISE by 3.8\% on the Insertion metric and achieves a substantial improvement of 50.7\% on the Deletion metric. Additionally, our method enhances the Point Game and Energy Point Game metrics by 8.3\% and 60.7\%, respectively. Due to the absence of object confidence scores in Florence-2, it becomes challenging to identify areas that quickly invalidate the model’s detection results. Compared to perturbation-based methods, our approach offers significant advantages; it relies more directly on the object itself, resulting in stronger faithfulness to the model’s detection behavior.


We also interpret referring expression comprehension (REC) interpretation task on the RefCOCO dataset, outperforming D-RISE by 6.1\% and 66.9\% in Insertion and Deletion metrics, respectively. Our approach also achieved state-of-the-art localization results, with a 14.6\% improvement in the Energy Point Game, demonstrating strong interpretability in accurately pinpointing object information.

We visualized some results of our method in Figure~\ref{vis:florence-2}, illustrating that it effectively identifies key information about the object’s location and categorical description. While D-RISE performed well on the Insertion metric, its attribution results were diffuse and lacked reliability, and it did not perform as well on the Deletion metric.

\begin{table}[!t]
    \caption{Evaluation of faithfulness metrics (Deletion and Insertion AUC scores) and location metrics (Point Game and Energy Point Game) on the MS COCO and RefCOCO validation sets for correctly detected and grounded samples using Florence-2.}\vspace{-18pt}
    \label{faithfulness_on_florence-2}
    \begin{center}
        \resizebox{0.48 \textwidth}{!}{
            \begin{tabular}{ll|cc|cc}
                \toprule
                \multirow{2}{*}{Datasets} & \multirow{2}{*}{Methods} & \multicolumn{2}{c}{Faithfulness Metrics} & \multicolumn{2}{c}{Location Metrics} \\ \cmidrule(lr){3-4} \cmidrule(l){5-6}
                & & Insertion ($\uparrow$) & Deletion ($\downarrow$) & Point Game ($\uparrow$) & Energy PG ($\uparrow$) \\ \midrule
                \multirow{3}{*}{\begin{tabular}[c]{@{}l@{}}MS COCO~\cite{lin2014microsoft}\\ (Detection task)\end{tabular}} 
                & D-RISE~\cite{petsiuk2021black} & 0.7477 & 0.0972 & 0.8850 & 0.1568 \\
                & D-HSIC~\cite{novello2022making} & 0.5345 & 0.2730 & 0.2925 & 0.0862 \\
                & \cellcolor[HTML]{EFEFEF}Ours & \cellcolor[HTML]{EFEFEF}\textbf{0.7759} & \cellcolor[HTML]{EFEFEF}\textbf{0.0479} & \cellcolor[HTML]{EFEFEF}\textbf{0.9583} & \cellcolor[HTML]{EFEFEF}\textbf{0.2519} \\ \midrule
                \multirow{3}{*}{\begin{tabular}[c]{@{}l@{}}RefCOCO~\cite{kazemzadeh2014referitgame}\\ (REC task)\end{tabular}} & D-RISE~\cite{petsiuk2021black} & 0.7922 & 0.3505 & 0.8480 & 0.2464 \\
                & D-HSIC~\cite{novello2022making} & 0.7639 & 0.3560 & 0.6980 & 0.2754 \\
                & \cellcolor[HTML]{EFEFEF}Ours & \cellcolor[HTML]{EFEFEF}\textbf{0.8409} & \cellcolor[HTML]{EFEFEF}\textbf{0.1159} & \cellcolor[HTML]{EFEFEF}\textbf{0.8660} & \cellcolor[HTML]{EFEFEF}\textbf{0.3927} \\
                \bottomrule
                \end{tabular}
        }
    \end{center}
    \vspace{-16pt}
\end{table}

\begin{figure}[!t]
    \centering
    \includegraphics[width=0.48\textwidth]{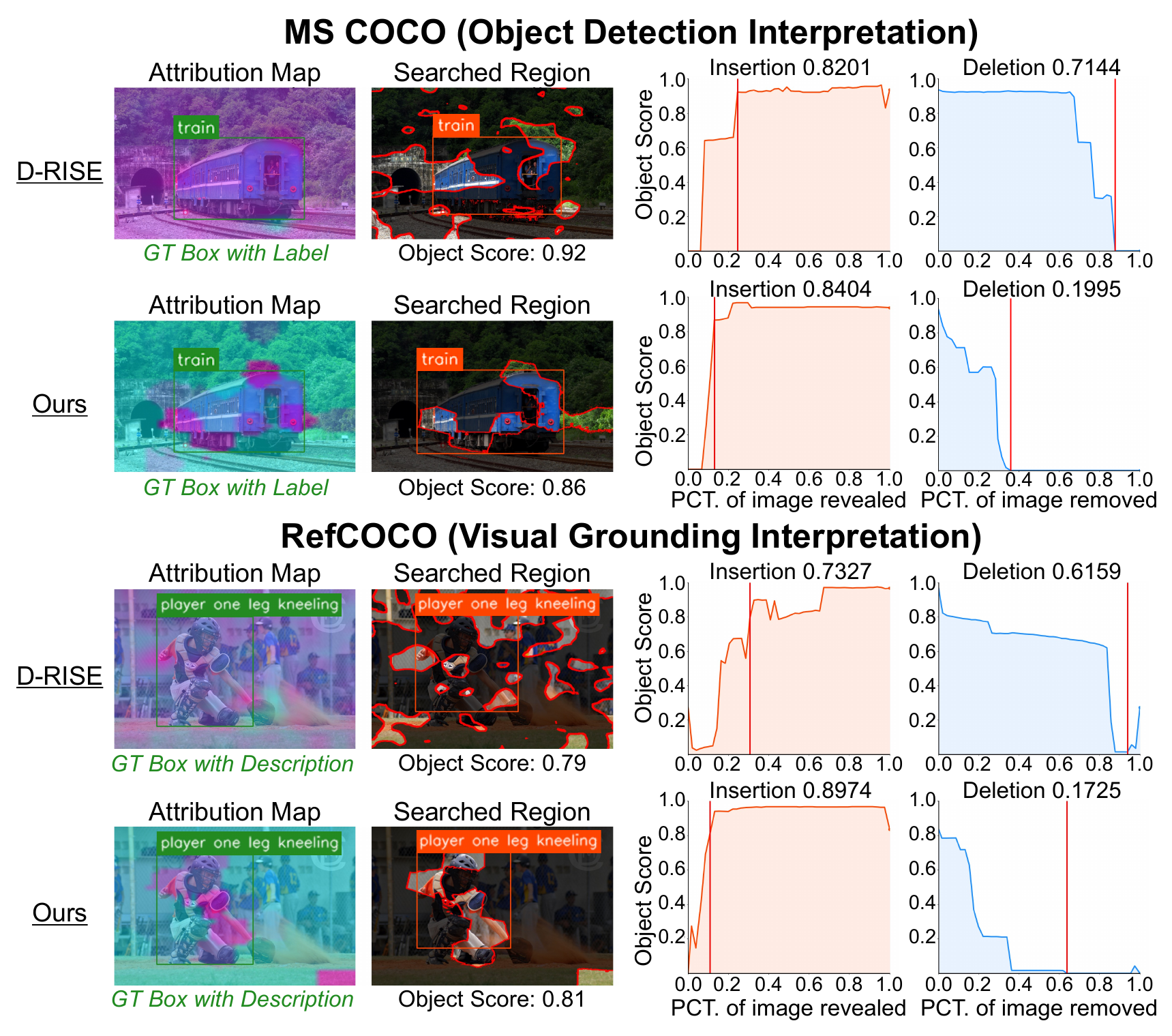}\vspace{-10pt} 
    \caption{Visualization results of Florence-2 for interpreting object localization tasks on the MS COCO and RefCOCO datasets. 
    }  
    \label{vis:florence-2}
    \vspace{-12pt}
\end{figure}

\subsection{Interpreting Failures in Object-Level Tasks}\label{explaining_error}


\subsubsection{Interpreting REC Failures}


Failures in the REC task primarily arise from visual grounding errors, often due to interference from other objects in the scene that become entangled with the text. Removing this distracting visual information can clarify grounding results and improve error understanding at the visual input level. To achieve this, we specify the correct location as the target box, denoted as $\boldsymbol{b}_{\text{target}}$, for input attribution. Table~\ref{failures_on_refcoco} shows the attribution results, where gradient-based methods exhibit limited performance in faithfulness metrics. Compared to D-RISE, our approach improves Insertion by 42.9\%, class score-based Insertion by 25.1\%, and average highest score by 14.9\%. Figure~\ref{vis:refcoco-mistake} illustrates model error interpretations, with the cyan-highlighted region indicating decision errors due to visual input interference that leads the foundation model astray.

\subsubsection{Interpreting Detection Failures}

Detection failures include misclassification and undetected objects. While previous works~\cite{petsiuk2021black,zhao2024gradient_detector} used visualization to explain errors, it lacked quantitative analysis, resulting in lower trustworthiness. We used saliency maps to identify input-level interference causing detection failures.

\textbf{Misclassified Interpretation.} As shown in Table~\ref{misclassification_on_ms_coco}, On the MS COCO dataset, our method outperforms D-RISE, improving Insertion by 54.7\%, Insertion (class) by 49.1\%, average highest score by 27.4\%, and explaining success rate (ESR) by 19.47\%. Similarly, on the LVIS dataset, our method shows advantages with improvements of 42.7\% in Insertion, 33.0\% in Insertion (class), 24.8\% in average highest score, and 24.34\% in ESR. This improvement is primarily due to our proposed Visual Precision Search, which confines attribution to the target negative response region, effectively refining the model’s detection results. Figure~\ref{vis:detection-misclassified} shows that the background surrounding the object interferes with the model’s decision-making. Improving the model by refining the contextual relationship between foreground and background may be a promising direction.

\begin{table}[!t]
    \caption{Insertion AUC scores and the average highest score on the RefCOCO validation sets for or the samples with incorrect localization in visual grounding using Grounding DINO.}\vspace{-18pt}
    \label{failures_on_refcoco}
    \begin{center}
        \resizebox{0.45\textwidth}{!}{
            \begin{tabular}{ll|ccc}
            \toprule
            \multirow{2}{*}{Datasets} & \multirow{2}{*}{Methods} & \multicolumn{3}{c}{Faithfulness Metrics} \\ \cmidrule(l){3-5} 
            & & Ins. ($\uparrow$) & Ins. (class) ($\uparrow$) & Ave. high. score ($\uparrow$) \\ \midrule
            \multirow{6}{*}{\begin{tabular}[c]{@{}l@{}}RefCOCO~\cite{kazemzadeh2014referitgame}\\ (REC task)\end{tabular}} & Grad-CAM~\cite{selvaraju2020grad} & 0.1536 & 0.2794 & 0.3295 \\
            & SSGrad-CAM++~\cite{yamauchi2024spatial} & 0.1590 & 0.2837 & 0.3266 \\
            & D-RISE~\cite{petsiuk2021black} & 0.3486 & 0.4787 & 0.6096 \\
            & D-HSIC~\cite{novello2022making} & 0.2274 & 0.3488 & 0.4495 \\
            & ODAM~\cite{zhao2024gradient_detector} & 0.1793 & 0.3001 & 0.3453 \\
            & \cellcolor[HTML]{EFEFEF}Ours & \cellcolor[HTML]{EFEFEF}\textbf{0.4981} & \cellcolor[HTML]{EFEFEF}\textbf{0.5990} & \cellcolor[HTML]{EFEFEF}\textbf{0.7007} \\ 
            \bottomrule
            \end{tabular}
        }
    \end{center}
    \vspace{-20pt}
\end{table}

\begin{figure}[!t]
    \centering
    \includegraphics[width=0.48\textwidth]{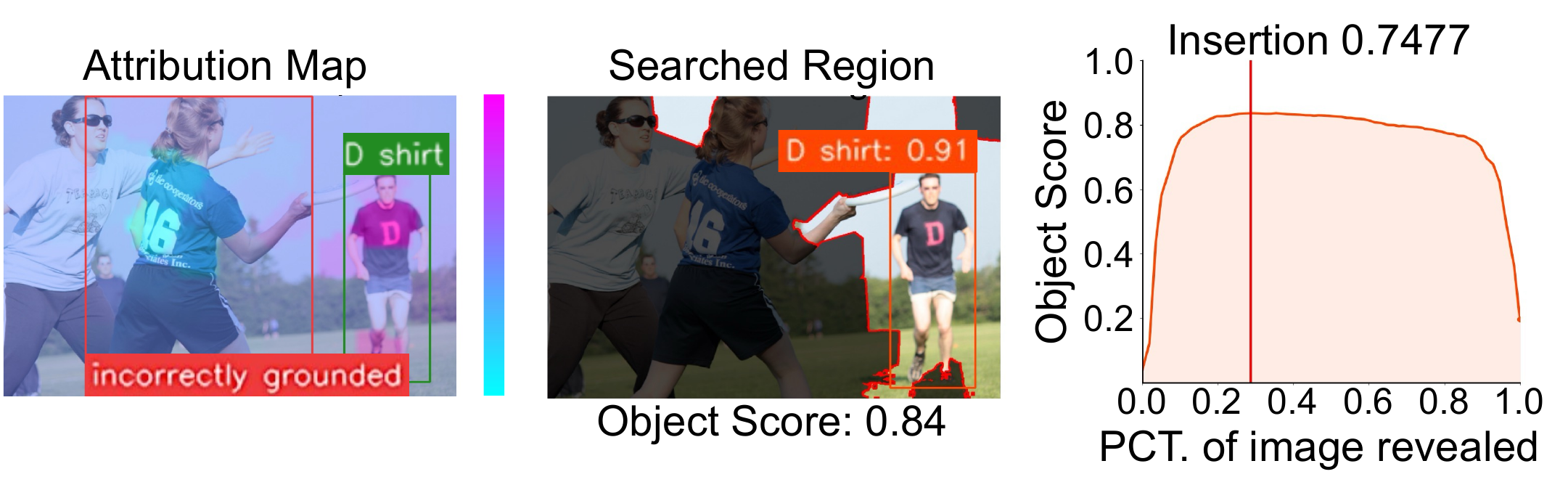}\vspace{-14pt} 
    \caption{Visualization of the method for discovering what causes the Grounding DINO incorrectly grounded on RefCOCO. 
    }  
    \label{vis:refcoco-mistake} 
    \vspace{-12pt}
\end{figure}

\begin{table}[!t]
    \caption{Insertion AUC scores, average highest score, and explaining successful rate (ESR) on the MS-COCO and the LVIS validation sets for misclassified samples using Grounding DINO.}\vspace{-18pt}
    \label{misclassification_on_ms_coco}
    \begin{center}
        \resizebox{0.48\textwidth}{!}{
            \begin{tabular}{ll|cccc}
            \toprule
            \multirow{2}{*}{Datasets} & \multirow{2}{*}{Methods} & \multicolumn{4}{c}{Faithfulness Metrics} \\ \cmidrule(l){3-6} 
            & & Ins. ($\uparrow$) & Ins. (class) ($\uparrow$) & Ave. high. score ($\uparrow$) & ESR ($\uparrow$) \\ \midrule
            \multirow{6}{*}{\begin{tabular}[c]{@{}l@{}}MS COCO~\cite{lin2014microsoft}\\ (Detection task)\end{tabular}} & Grad-CAM~\cite{selvaraju2020grad} & 0.1091 & 0.1478 & 0.3102 & 38.38\% \\
            & SSGrad-CAM++~\cite{yamauchi2024spatial} & 0.0960 & 0.1336 & 0.2952 & 33.51\% \\
            & D-RISE~\cite{petsiuk2021black} & 0.2170 & 0.2661 & 0.3603 & 50.26\% \\
            & D-HSIC~\cite{novello2022making} & 0.1771 & 0.2161 & 0.3143 & 34.59\% \\
            & ODAM~\cite{zhao2024gradient_detector} & 0.1129 & 0.1486 & 0.2869 & 32.97\% \\
            & \cellcolor[HTML]{EFEFEF}Ours & \cellcolor[HTML]{EFEFEF}\textbf{0.3357} & \cellcolor[HTML]{EFEFEF}\textbf{0.3967} & \cellcolor[HTML]{EFEFEF}\textbf{0.4591} & \cellcolor[HTML]{EFEFEF}\textbf{69.73\%} \\ \midrule
            \multirow{6}{*}{\begin{tabular}[c]{@{}l@{}}LVIS V1 (rare)~\cite{gupta2019lvis}\\ (Zero-shot det. task)\end{tabular}} & Grad-CAM~\cite{selvaraju2020grad} & 0.0503 & 0.0891 & 0.1564 & 12.50\% \\
            & SSGrad-CAM++~\cite{yamauchi2024spatial} & 0.0574 & 0.0946 & 0.1580 & 11.84\% \\
            & D-RISE~\cite{petsiuk2021black} & 0.1245 & 0.1647 & 0.2088 & 28.95\% \\
            & D-HSIC~\cite{novello2022making} & 0.0963 & 0.1247 & 0.1748 & 16.45\% \\
            & ODAM~\cite{zhao2024gradient_detector} & 0.0575 & 0.0954 & 0.1520 & 9.21\% \\
            & \cellcolor[HTML]{EFEFEF}Ours & \cellcolor[HTML]{EFEFEF}\textbf{0.1776} & \cellcolor[HTML]{EFEFEF}\textbf{0.2190} & \cellcolor[HTML]{EFEFEF}\textbf{0.2606} & \cellcolor[HTML]{EFEFEF}\textbf{53.29\%} \\
            \bottomrule
            \end{tabular}
        }
    \end{center}
    \vspace{-20pt}
\end{table}

\begin{figure}[!t]
    \centering
    \includegraphics[width=0.48\textwidth]{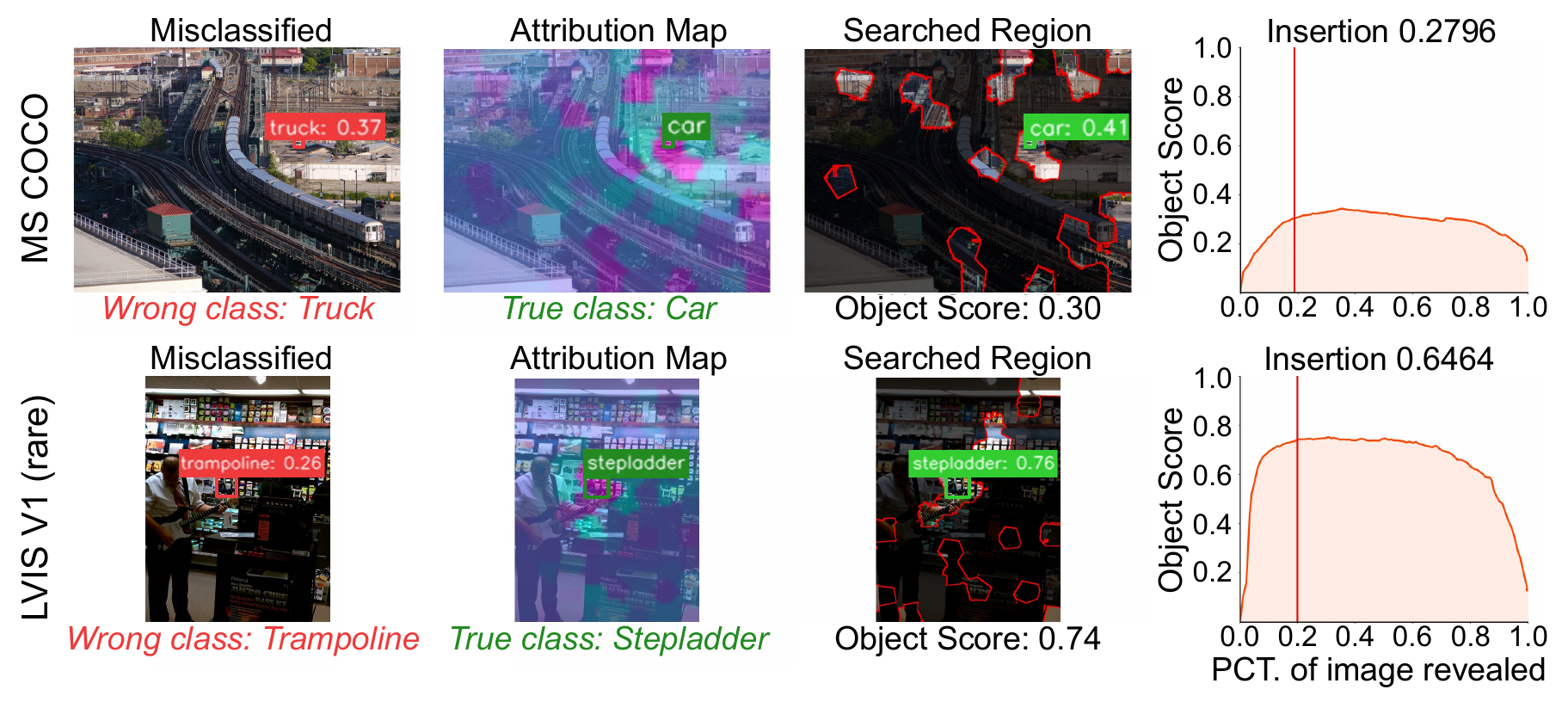}\vspace{-12pt} 
    \caption{Visualization of our method reveals the causes of Grounding DINO misclassifications on MS COCO and LVIS. The cyan region in the saliency map highlights the regions responsible for the model’s misclassification.
    }  
    \label{vis:detection-misclassified} 
    \vspace{-16pt}
\end{figure}

\begin{table}[!t]
    \vspace{-16pt}
    \caption{Insertion, average highest score, and explaining successful rate (ESR) on the MS-COCO and the LVIS V1 (rare) validation sets for missed detection samples using Grounding DINO.}\vspace{-22pt}
    \label{misdetection_on_ms_coco}
    \begin{center}
        \resizebox{0.48\textwidth}{!}{
            \begin{tabular}{ll|cccc}
            \toprule
            \multirow{2}{*}{Datasets} & \multirow{2}{*}{Methods} & \multicolumn{4}{c}{Faithfulness Metrics}          \\ \cmidrule(l){3-6} 
            & & Ins. ($\uparrow$) & Ins. (class) ($\uparrow$) & Ave. high. score ($\uparrow$) & ESR ($\uparrow$) \\ \midrule
            \multirow{6}{*}{\begin{tabular}[c]{@{}l@{}}MS COCO~\cite{lin2014microsoft}\\ (Detection task)\end{tabular}} & Grad-CAM~\cite{selvaraju2020grad} & 0.0760 & 0.1321 & 0.2153 & 16.44\% \\
            & SSGrad-CAM++~\cite{yamauchi2024spatial} & 0.0671 & 0.1151 & 0.2124 & 16.44\% \\
            & D-RISE~\cite{petsiuk2021black} & 0.1538 & 0.2260 & 0.2564 & 26.94\% \\
            & D-HSIC~\cite{novello2022making} & 0.1101 & 0.1716 & 0.1945 & 13.56\% \\
            & ODAM~\cite{zhao2024gradient_detector} & 0.0745 & 0.1350 & 0.2037 & 13.78\% \\
            & \cellcolor[HTML]{EFEFEF}Ours & \cellcolor[HTML]{EFEFEF}\textbf{0.2102} & \cellcolor[HTML]{EFEFEF}\textbf{0.3011} & \cellcolor[HTML]{EFEFEF}\textbf{0.3014} & \cellcolor[HTML]{EFEFEF}\textbf{41.33\%} \\ \midrule
            \multirow{6}{*}{\begin{tabular}[c]{@{}l@{}}LVIS V1 (rare)~\cite{gupta2019lvis}\\ (Zero-shot det. task)\end{tabular}} & Grad-CAM~\cite{selvaraju2020grad} & 0.0291 & 0.0689 & 0.0901 & 5.43\% \\
            & SSGrad-CAM++~\cite{yamauchi2024spatial} & 0.0292 & 0.0680 & 0.0897 & 5.24\% \\
            & D-RISE~\cite{petsiuk2021black} & 0.0703 & 0.1184 & 0.1312 & 18.73\% \\
            & D-HSIC~\cite{novello2022making} & 0.0516 & 0.0920 & 0.1168 & 13.48\% \\
            & ODAM~\cite{zhao2024gradient_detector} & 0.0283 & 0.0716 & 0.0851 & 4.68\% \\
            & \cellcolor[HTML]{EFEFEF}Ours & \cellcolor[HTML]{EFEFEF}\textbf{0.1155} & \cellcolor[HTML]{EFEFEF}\textbf{0.1886} & \cellcolor[HTML]{EFEFEF}\textbf{0.1784} & \cellcolor[HTML]{EFEFEF}\textbf{30.15\%} \\
            \bottomrule
            \end{tabular}
        }
    \end{center}
\end{table}

\begin{figure}[!t]
    \centering
    \includegraphics[width=0.48\textwidth]{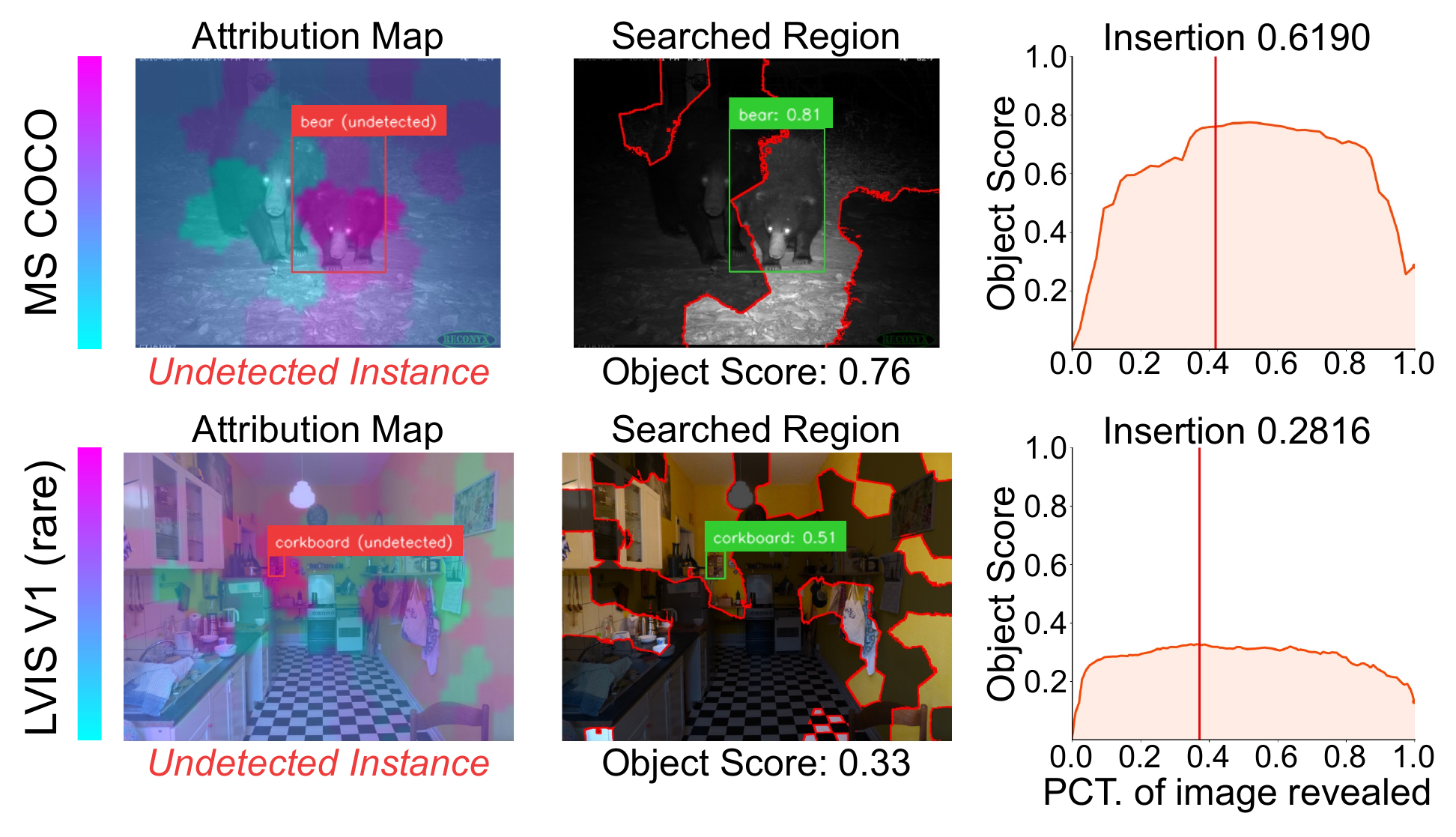}\vspace{-12pt} 
    \caption{Visualization of our method reveals the causes of Grounding DINO undetected on MS COCO and LVIS. The cyan region in the saliency map highlights the regions responsible for the model’s detection failure.
    }  
    \label{vis:detection-undetected} 
    \vspace{-16pt}
\end{figure}

\textbf{Undetected Interpretation.} Low object confidence may result from both the model’s feature representation and confounding input-level factors. We analyze causes of missed detections, with Table~\ref{misdetection_on_ms_coco} showing that our method achieves SOTA performance across metrics. On MS COCO, it outperforms D-RISE with a 36.7\% improvement in Insertion and a 14.39\% gain in ESR, while on LVIS, it improves Insertion by 64.3\% and ESR by 11.42\%. Figure~\ref{vis:detection-undetected} visualizes our method’s explanations for undetected instances, revealing that errors may arise from challenges in distinguishing similar objects (e.g., the bear in the first row) and environmental influence on detection (e.g., the corkboard in the second row). These insights highlight current model limitations, offering directions for improvement.

\subsection{Ablation Study}\label{ablation_study}


\textbf{Ablation of the Submodular Function.} We analyze the effectiveness of our submodular function design on Grounding DINO using correctly detected samples from MS COCO. Table~\ref{ablation_function} shows the results. The Clue Score identifies regions beneficial for detection, improving Insertion and average highest score, while the Collaboration Score pinpoints sensitive regions, enhancing the Deletion metric by causing the detector to fail more quickly. Combining these scores enables our method to achieve optimal results across indicators, demonstrating the effectiveness of each score function within the submodular function.

\textbf{Ablation on Divided Sub-region Number.} The sub-region number affects search space quality. Using Grounding DINO, we assess its impact on correctly predicted and misclassified samples in MS COCO. Figures~\ref{ablation-subregion-number}A-B show that when the model predicts correctly, increasing the number of sub-regions leads to higher Insertion and average highest scores, indicating that finer divisions enhance the faithfulness of search results. Additionally, Figures~\ref{ablation-subregion-number}D-F show that, for misclassified samples, Insertion, average highest score, and ESR also improve as the number of sub-regions increases. Figure~\ref{ablation-subregion-number}C shows inference times for our search algorithm on an RTX 3090 compared to D-RISE across various subregion counts. Increasing sub-regions improves faithfulness but also rapidly increases inference time. 
Future work will aim to increase subregion count and inference speed without sacrificing attribution performance.

\begin{table}[!t]
    \vspace{-16pt}
    \caption{Ablation study on function score components for Grounding DINO on the MS COCO validation set.}\vspace{-16pt}
    \label{ablation}
    \begin{center}
    \resizebox{0.42\textwidth}{!}{
        \begin{tabular}{cc|ccc}
        \toprule
        Clue Score & Colla. Score & \multicolumn{3}{c}{Faithfulness Metrics}  \\ 
        (Eq.~\ref{clue_score}) & (Eq.~\ref{colla_score}) & Insertion ($\uparrow$)  & Deletion ($\downarrow$) & Ave. high. score ($\uparrow$) \\ \midrule
        \XSolidBrush & \Checkmark  & 0.3632 & \underline{0.0378} &  0.5967 \\
        \Checkmark & \XSolidBrush  & \underline{0.5370} & 0.0799 & \underline{0.6864} \\
        \Checkmark & \Checkmark    & \textbf{0.5459} & \textbf{0.0375} & \textbf{0.6873} \\  \bottomrule
        \end{tabular}
    }
    \end{center}
    \label{ablation_function}
    \vspace{-22pt}
\end{table}

\begin{figure}[!t]
    \centering
    \includegraphics[width=0.48\textwidth]{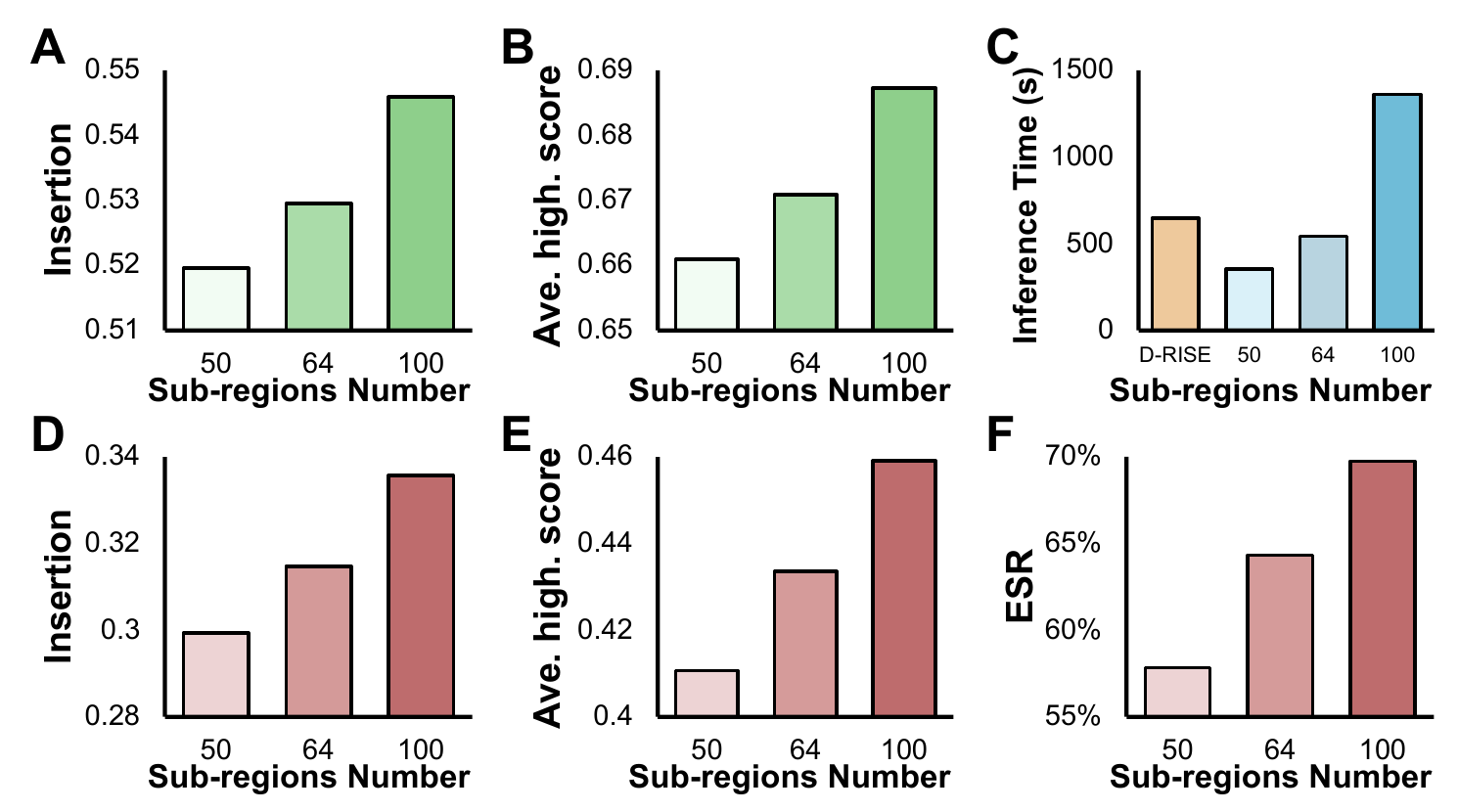}\vspace{-10pt} 
    \caption{Ablation on the number of sub-regions. For correct detections, panels \textbf{A} and \textbf{B} show the effect of sub-region count on Insertion and average highest score, respectively. Panel \textbf{C} presents the inference time for different sub-region counts. For misclassified samples, panels \textbf{D}, \textbf{E}, and \textbf{F} display the effects on Insertion, average highest score, and explanation success rate.
    } 
    \label{ablation-subregion-number} 
\end{figure}

\section{Conclusion}
In this paper, we propose an interpretable attribution method specifically tailored for object-level foundation models, called the Visual Precision Search method, which introduces a novel submodular mechanism that combines a clue score and a collaboration score. This method achieves enhanced interpretability with fewer search regions. Experiments on RefCOCO, MS COCO, and LVIS demonstrate that our approach improves object-level task interpretability over state-of-the-art methods for Grounding DINO and Florence-2 across various evaluation metrics. Furthermore, our method effectively interprets failures in visual grounding and object detection tasks.

\section*{Acknowledgements} This work was supported by the Shenzhen Science and Technology Program (No. KQTD20221101093559018), National Natural Science Foundation of China (No. 62372448, 62025604, 62306308, 62441619), Foundation of Key Laboratory of Education Informatization for Nationalities (YNNU), Ministry of Education (No. EIN2024B004).


{
    \small
    \bibliographystyle{ieeenat_fullname}
    \bibliography{main}
}

\appendix

\section*{\Huge Appendix}

\section{Proof of Theorem~\ref{submodular_properties} (Submodular Properties)}

\begin{proof}
Consider two sub-sets $S_{A}$ and $S_{B}$ in set $V$, where $S_{A} \subseteq S_{B} \subseteq V$. Given an element $\alpha$, where $\alpha = V \setminus S_{B}$. Let $\alpha = V \setminus S_{B}$ represent an element not in $S_{B}$. For the function $\mathcal{F}(\cdot)$  to satisfy the submodular property, the following necessary and sufficient conditions must hold diminishing returns,
\begin{equation}
    \mathcal{F}\left(S_{A} \cup \{ \alpha \}\right) - \mathcal{F}\left(S_{A} \right) \ge \mathcal{F}\left(S_{B} \cup \{ \alpha \}\right) - \mathcal{F}\left(S_{B} \right),
\end{equation}
and monotonic non-negative, $\mathcal{F}(S_A \cup \{\alpha\}) - \mathcal{F}(S) \ge 0$.

For Clue score (Eq.~\ref{clue_score}), let $f_{\text{cls}}(S) = \boldsymbol{s}_c$, $f_{\text{reg}}(S) = \boldsymbol{b}$, assuming that $f_{\text{cls}}$ and $f_{\text{reg}}(S)$ is differentiable in $S$, the individual element $\alpha$ of the collection division is relatively small, according to the Taylor decomposition~\cite{montavon2017explaining}, we can locally approximate $f_{\text{cls}} \left(S_{A} + \alpha\right) = f_{\text{cls}} \left( S_{A} \right) + \nabla{f_{\text{cls}}\left( S_{A} \right)} \cdot \alpha$, and $f_{\text{reg}} \left(S_{A} + \alpha\right) = f_{\text{reg}} \left( S_{A} \right) + \nabla{f_{\text{reg}}\left( S_{A} \right)} \cdot \alpha$. Since object detection can output many candidate boxes, we can regard the model output with added sub-elements as incremental output, that is, $f_{\text{cls}} \left(S_{A} + \alpha\right) = \boldsymbol{s}_c + \nabla{f_{\text{cls}}\left( S_{A} \right)} \cdot \alpha = \boldsymbol{s}_c + \boldsymbol{s}^{\ast}_c$, $f_{\text{reg}} \left(S_{A} + \alpha\right) = \boldsymbol{b} + \nabla{f_{\text{reg}}\left( S_{A} \right)} \cdot \alpha = \boldsymbol{b} + \boldsymbol{b}^{\ast}$. Assuming that the searched $\alpha$ is valid, i.e., $\nabla{f_{\text{reg}}}>0$ and $\nabla{f_{\text{cls}}}>0$. Thus:
\begin{equation}\label{clue_monotonic}
    \begin{aligned}
     & s_{\text{clue}}(S_{A}+\alpha, \boldsymbol{b}_{\text{target},c}) - s_{\text{clue}}(S_{A},\boldsymbol{b}_{\text{target}},c) \\
    =& \max_{\boldsymbol{b}_i \in f_{\text{reg}}(S_{A}+\alpha), s_{c,i}\in f_{\text{cls}}(S_{A}+\alpha)}\text{IoU}(\boldsymbol{b}_{\text{target}}, \boldsymbol{b}_i)\cdot s_{c,i} \\
    & - \max_{\boldsymbol{b}_i \in f_{\text{reg}}(S_{A}), s_{c,i}\in f_{\text{cls}}(S_{A})}\text{IoU}(\boldsymbol{b}_{\text{target}}, \boldsymbol{b}_i)\cdot s_{c,i} \\
    =&\max (s_{\text{clue}}(S_{A},\boldsymbol{b}_{\text{target}},c), \max_{\boldsymbol{b}_i \in \boldsymbol{b}^{\ast}, s_{c,i} \in \boldsymbol{s}^{\ast}_c}\text{IoU}(\boldsymbol{b}_{\text{target}}, \boldsymbol{b}_i) \cdot s_{c,i} )\\
    &-s_{\text{clue}}(S_{A},\boldsymbol{b}_{\text{target}},c)\\
    =&\max (0, \max_{\boldsymbol{b}_i \in \boldsymbol{b}^{\ast}, s_{c,i} \in \boldsymbol{s}^{\ast}_c} \nabla{f_{\text{reg}}\left( S_{A} \right)} \cdot \nabla{f_{\text{cls}}\left( S_{A} \right)} \cdot \alpha^{2})  \\
    \ge&0,
    \end{aligned}
\end{equation}
the clue score satisfies the monotonic non-negative property in the process of maximizing the marginal effect. Since $S_A \subseteq S_B \subseteq V$, for the model’s candidate boxes, $f_{\text{reg}}(S_B) > f_{\text{reg}}(S_A)$, then the range of the gain candidate box $\boldsymbol{b}_{A}^{\ast}$ that can be generated is $f_{\text{reg}}(V) - f_{\text{reg}}(S_A)$. After introducing the new element $\alpha$, a new candidate box with a gain $\boldsymbol{b}_{A}^{\ast} > \boldsymbol{b}_{B}^{\ast}$, closer to the target, can be generated. If both $S_A$ and $S_B$ contain positive subsets, then $\nabla{f_{\text{cls}}\left( S_{B} \right)}$ will become less severe or even disappear~\cite{sundararajan2017axiomatic}, thus, $\nabla{f_{\text{cls}}\left( S_{A} \right)} > \nabla{f_{\text{cls}}\left( S_{B} \right)}$. So we have:
\begin{equation}
    \begin{aligned}
    &\max_{\boldsymbol{b}_i \in \boldsymbol{b}_{A}^{\ast}, s_{c,i} \in \boldsymbol{s}^{\ast}_c} \nabla{f_{\text{reg}}\left( S_{A} \right)} \cdot \nabla{f_{\text{cls}}\left( S_{A} \right)} \cdot \alpha^{2} > \\
    &\max_{\boldsymbol{b}_i \in \boldsymbol{b}_{B}^{\ast}, s_{c,i} \in \boldsymbol{s}^{\ast}_c} \nabla{f_{\text{reg}}\left( S_{B} \right)} \cdot \nabla{f_{\text{cls}}\left( S_{B} \right)} \cdot \alpha^{2},
    \end{aligned}
\end{equation}
combining Eq.~\ref{clue_monotonic}, we have:
\begin{equation}\label{clue_submodular}
    \begin{aligned}
        &s_{\text{clue}}(S_{A}+\alpha, \boldsymbol{b}_{\text{target},c}) - s_{\text{clue}}(S_{A},\boldsymbol{b}_{\text{target}},c) > \\
        &s_{\text{clue}}(S_{B}+\alpha, \boldsymbol{b}_{\text{target},c}) - s_{\text{clue}}(S_{B},\boldsymbol{b}_{\text{target}},c).
    \end{aligned}
\end{equation}

\begin{table*}[]
    \caption{Evaluation of faithfulness metrics (Deletion, Insertion AUC scores, and average highest score) and location metrics (Point Game and Energy Point Game) on the MS-COCO validation set for correctly detected or grounded samples using traditional object detectors.}
    \label{faithfulness_on_traditional}
    \begin{center}
        \resizebox{\textwidth}{!}{
            \begin{tabular}{ll|cccccc|c}
                \toprule
                \multirow{2}{*}{Detectors} & \multirow{2}{*}{Methods}  & \multicolumn{6}{c}{Faithfulness Metrics} & \multicolumn{1}{c}{Location Metric} \\ \cmidrule(lr){3-8} \cmidrule(l){9-9}
                & & Ins. ($\uparrow$) & Del. ($\downarrow$) & Ins. (class) ($\uparrow$) & Del. (class) ($\downarrow$) & Ins. (IoU) ($\uparrow$) & Del. (IoU) ($\downarrow$) & Point Game ($\uparrow$) \\
                \midrule
                \multirow{4}{*}{\begin{tabular}[c]{@{}l@{}}Mask R-CNN~\cite{he2018mask}\\ (Two-stage)\end{tabular}} 
                & Grad-CAM~\cite{selvaraju2020grad} & 0.2657 & 0.2114 & 0.3746 & 0.3122 & 0.5348 & 0.4954 & 0.5554 \\
                & D-RISE~\cite{petsiuk2021black} & 0.6756 & 0.0814 & 0.7666 & 0.1570 & 0.8396 & 0.2987 & 0.8996 \\
                & ODAM~\cite{zhao2024gradient_detector} & 0.6067 & 0.0787 & 0.7218 & 0.1860 & 0.7890 & 0.3188 & 0.9934 \\
                & \cellcolor[HTML]{EFEFEF}Ours & \cellcolor[HTML]{EFEFEF}\textbf{0.7991} & \cellcolor[HTML]{EFEFEF}\textbf{0.0489} & \cellcolor[HTML]{EFEFEF}\textbf{0.8678} & \cellcolor[HTML]{EFEFEF}\textbf{0.1065} & \cellcolor[HTML]{EFEFEF}\textbf{0.8968} & \cellcolor[HTML]{EFEFEF}\textbf{0.2841} & \cellcolor[HTML]{EFEFEF}\textbf{0.9987} \\  
                \midrule
                \multirow{4}{*}{\begin{tabular}[c]{@{}l@{}}YOLO V3~\cite{redmon2018yolov3}\\ (One-stage)\end{tabular}} 
                & Grad-CAM~\cite{selvaraju2020grad} & 0.6283 & 0.2867 & 0.7961 & 0.4573 & 0.7271 & 0.5234 & 0.7268 \\ 
                & D-RISE~\cite{petsiuk2021black} & 0.7524 & 0.1889 & 0.8747 & 0.3629 & 0.8213 & 0.4587 & 0.8816 \\
                & ODAM~\cite{zhao2024gradient_detector} & 0.7329 & 0.2766 & 0.8943 & 0.4707 & 0.7936 & 0.5283 & 0.9838 \\ 
                & \cellcolor[HTML]{EFEFEF}Ours & \cellcolor[HTML]{EFEFEF}\textbf{0.8674} & \cellcolor[HTML]{EFEFEF}\textbf{0.1407} & \cellcolor[HTML]{EFEFEF}\textbf{0.9490} & \cellcolor[HTML]{EFEFEF}\textbf{0.3008} & \cellcolor[HTML]{EFEFEF}\textbf{0.8984} & \cellcolor[HTML]{EFEFEF}\textbf{0.3814} & \cellcolor[HTML]{EFEFEF}\textbf{0.9900} \\ 
                \midrule
                \multirow{4}{*}{\begin{tabular}[c]{@{}l@{}}FCOS~\cite{tian2020fcos}\\ (One-stage)\end{tabular}} 
                & Grad-CAM~\cite{selvaraju2020grad} & 0.2742 & 0.1417 & 0.3439 & 0.1845 & 0.6858 & 0.6176 & 0.5249 \\ 
                & D-RISE~\cite{petsiuk2021black} & 0.4421 & 0.0570 & 0.4968 & 0.1078 & 0.8578 & 0.3729 & 0.9193 \\
                & ODAM~\cite{zhao2024gradient_detector} & 0.4266 & 0.0497 & 0.4742 & 0.0853 & 0.8713 & 0.4212 & 0.9935 \\ 
                & \cellcolor[HTML]{EFEFEF}Ours & \cellcolor[HTML]{EFEFEF}\textbf{0.5746} & \cellcolor[HTML]{EFEFEF}\textbf{0.0414} & \cellcolor[HTML]{EFEFEF}\textbf{0.6301} & \cellcolor[HTML]{EFEFEF}\textbf{0.0815} & \cellcolor[HTML]{EFEFEF}\textbf{0.8900} & \cellcolor[HTML]{EFEFEF}\textbf{0.3698} & \cellcolor[HTML]{EFEFEF}\textbf{0.9980} \\ 
                \midrule
                \multirow{4}{*}{\begin{tabular}[c]{@{}l@{}}SSD~\cite{liu2016ssd}\\ (One-stage)\end{tabular}} 
                & Grad-CAM~\cite{selvaraju2020grad} & 0.3869 & 0.1466 & 0.4977 & 0.2022 & 0.6796 & 0.5366 & 0.7700 \\ 
                & D-RISE~\cite{petsiuk2021black} & 0.4882 & 0.0616 & 0.5722 & 0.0979 & 0.7852 & 0.4497 & 0.9243 \\
                & ODAM~\cite{zhao2024gradient_detector} & 0.5117 & 0.0913 & 0.6072 & 0.1416 & 0.7900 & 0.4801 & 0.9778 \\ 
                & \cellcolor[HTML]{EFEFEF}Ours & \cellcolor[HTML]{EFEFEF}\textbf{0.6891} & \cellcolor[HTML]{EFEFEF}\textbf{0.0483} & \cellcolor[HTML]{EFEFEF}\textbf{0.7594} & \cellcolor[HTML]{EFEFEF}\textbf{0.0835} & \cellcolor[HTML]{EFEFEF}\textbf{0.8700} & \cellcolor[HTML]{EFEFEF}\textbf{0.4366} & \cellcolor[HTML]{EFEFEF}\textbf{0.9941} \\
                \bottomrule
                \end{tabular}
        }
    \end{center}
\end{table*}

\begin{figure*}[!t]
    \centering
    \includegraphics[width=\textwidth]{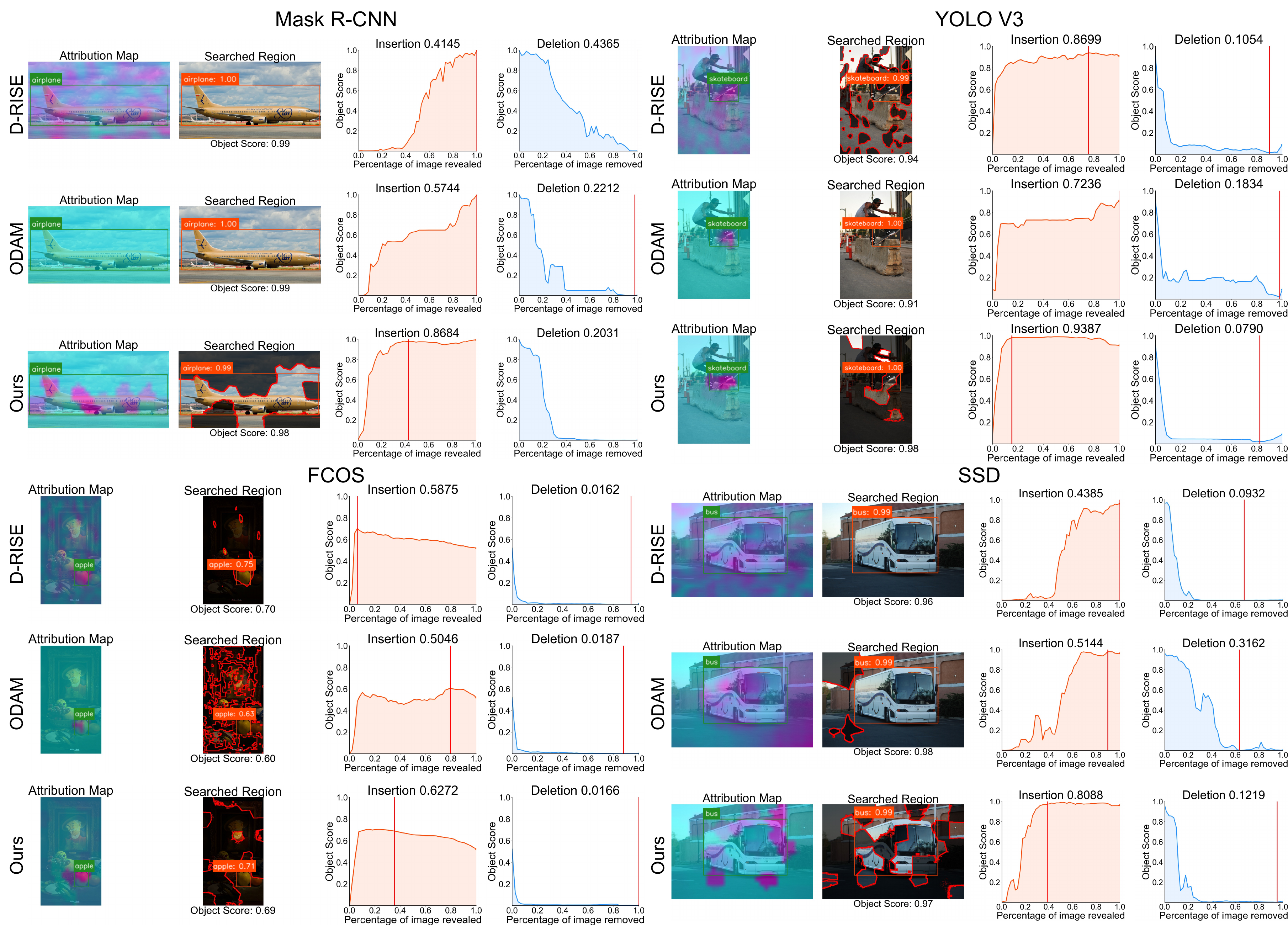}
    \caption{Visualization results of four object detectors for interpreting object detection task on the MS COCO dataset.} 
    \label{traditional_detector_visualization} 
\end{figure*}

Similar, for Collaboration score (Eq.~\ref{colla_score}), according to the Taylor decomposition~\cite{montavon2017explaining}, we can locally approximate $f_{\text{cls}} \left(V \setminus (S_{A} + \alpha) \right) = f_{\text{cls}} \left( V \setminus S_{A} \right) - \nabla{f_{\text{cls}}\left(V \setminus S_{A} \right)} \cdot \alpha$ and $f_{\text{reg}} \left(V \setminus (S_{A} + \alpha) \right) = f_{\text{reg}} \left( V \setminus S_{A} \right) - \nabla{f_{\text{reg}}\left(V \setminus S_{A} \right)} \cdot \alpha$. The model output can be viewed as a negative gain process, i.e., $f_{\text{reg}} \left(V \setminus (S_{A} + \alpha) \right) = \boldsymbol{b} - \boldsymbol{b}^{\ast}$. Assuming that the searched $\alpha$ is valid, i.e., $\nabla{f_{\text{reg}}}>0$ and $\nabla{f_{\text{cls}}}>0$. We have:
\begin{equation}\label{collaboration_monotonic}
    \begin{aligned}
    & s_{\text{colla.}}(S_{A}+\alpha, \boldsymbol{b}_{\text{target},c}) - s_{\text{colla.}}(S_{A},\boldsymbol{b}_{\text{target}},c) \\
    =& \max_{\boldsymbol{b}_i \in f_{\text{reg}}(V \setminus S_{A}), s_{c,i}\in f_{\text{cls}}(V \setminus S_{A})}\text{IoU}(\boldsymbol{b}_{\text{target}}, \boldsymbol{b}_i)\cdot s_{c,i} \\
    &-\max_{\boldsymbol{b}_i \in f_{\text{reg}}(V \setminus (S_{A}+\alpha)), s_{c,i}\in f_{\text{cls}}(V \setminus (S_{A}+\alpha))}\text{IoU}(\boldsymbol{b}_{\text{target}}, \boldsymbol{b}_i)\cdot s_{c,i} \\
    =& 1- s_{\text{colla.}}(S_{A},\boldsymbol{b}_{\text{target}},c) \\
    & - \min ( 1- s_{\text{colla.}}(S_{A},\boldsymbol{b}_{\text{target}},c), \max_{\boldsymbol{b}_i \in \boldsymbol{b}^{\ast}, s_{c,i} \in \boldsymbol{s}^{\ast}_c} \text{IoU}(\boldsymbol{b}_{\text{target}}, \boldsymbol{b}_i)\cdot s_{c,i}) \\
    =&\max (0, \max_{\boldsymbol{b}_i \in \boldsymbol{b}^{\ast}, s_{c,i} \in \boldsymbol{s}^{\ast}_c} \nabla{f_{\text{reg}}\left( V \setminus S_{A} \right)} \cdot \nabla{f_{\text{cls}}\left( V \setminus S_{A} \right)} \cdot \alpha^{2})  \\
    \ge&0,
    \end{aligned}
\end{equation}
the collaboration score satisfies the monotonic non-negative property in the process of maximizing the marginal effect. Since $S_A \subseteq S_B \subseteq V$, more candidate boxes will be deleted, thus, $\boldsymbol{b}_{A}^{\ast} > \boldsymbol{b}_{B}^{\ast}$, and $\nabla{f_{\text{reg}}\left( V \setminus S_{A} \right)} > \nabla{f_{\text{reg}}\left( V \setminus S_{B} \right)}$. Since only a small number of candidate boxes $\boldsymbol{b}^{\ast}$ are removed, $\nabla{f_{\text{cls}}}(S_{A})\cdot\alpha$ can be regarded as a tiny constant and can be ignored.
Combining Eq.~\ref{collaboration_monotonic}, we have:
\begin{equation}\label{colla_submodular}
    \begin{aligned}
        &s_{\text{colla.}}(S_{A}+\alpha, \boldsymbol{b}_{\text{target},c}) - s_{\text{colla.}}(S_{A},\boldsymbol{b}_{\text{target}},c) > \\
        &s_{\text{colla.}}(S_{B}+\alpha, \boldsymbol{b}_{\text{target},c}) - s_{\text{colla.}}(S_{B},\boldsymbol{b}_{\text{target}},c).
    \end{aligned}
\end{equation}

We can prove that both the Clue Score and Collaboration Score satisfy submodularity under certain conditions. Since any linear combination of submodular functions is itself submodular~\cite{fujishige2005submodular}, we have:
\begin{equation}
    \mathcal{F}\left(S_{A} \cup \{ \alpha \}\right) - \mathcal{F}\left(S_{A} \right) \ge \mathcal{F}\left(S_{B} \cup \{ \alpha \}\right) - \mathcal{F}\left(S_{B} \right),
\end{equation}
and we can prove that Eq.~\ref{submodular_function} satisfies the submodular properties.
\end{proof}

From the above derivation, we find that the gain or negative gain condition of bounding boxes $\boldsymbol{b}^{\ast}$ is critical for satisfying submodularity. Thus, the detection model $f$, which can return relevant confidence candidate boxes for any combination of input sub-regions, is theoretically guaranteed to meet the required boundaries. Most detection models fulfill this condition by not filtering out low-confidence candidate boxes. However, multimodal large language model-based detection models, which may not directly output candidate boxes or confidence scores, do not fully satisfy these assumptions. This highlights room for improvement in explaining the results of such models.

\section{Faithfulness in Traditional Detectors}

We also validated the effectiveness of our interpretation method on traditional object detectors, including the two-stage detector Mask R-CNN (ResNet-50 backbone with Feature Pyramid Networks)~\cite{he2018mask} and the one-stage detectors YOLO v3 (DarkNet-53 backbone)~\cite{redmon2018yolov3}, FCOS (ResNet-50 backbone with Feature Pyramid Networks)~\cite{tian2020fcos}, and SSD~\cite{liu2016ssd}. We use the pre-trained models provided by MMDetection 3.3.0\footnote{\url{https://github.com/open-mmlab/mmdetection}} for interpretation. Following the evaluation settings of D-RISE~\cite{petsiuk2021black} and ODAM~\cite{zhao2024gradient_detector}, we selected samples for interpretation that were correctly predicted by the model with high confidence and precise localization.

Table~\ref{faithfulness_on_traditional} presents the results. We observe that, when considering both location and classification faithfulness metrics, D-RISE emphasizes location information more than ODAM, resulting in better performance in this aspect. In contrast, ODAM demonstrates higher faithfulness to classification scores on certain detectors, such as SSD and YOLO v3. Notably, ODAM outperforms D-RISE in the location metric Point Game. While ODAM and D-RISE have specific advantages across different metrics and models, our method consistently achieves state-of-the-art results across all models and metrics. On Mask R-CNN, our method outperforms D-RISE by 18.3\% in Insertion and 31.7\% in ODAM, as well as by 39.9\% and 37.9\% in Deletion. On YOLO V3, our method outperforms D-RISE by 15.3\% in Insertion and 18.4\% in ODAM and by 25.5\% and 49.1\% in Deletion. On FCOS, our method surpasses D-RISE by 30.0\% in Insertion and 34.7\% in ODAM, and by 27.4\% and 16.7\% in Deletion. On SSD, our method outperforms D-RISE by 41.2\% in Insertion and 17.7\% in ODAM, and by 21.6\% and 47.1\% in Deletion.

From the above results, we found that our method remains highly interpretable even on traditional object detection models, demonstrating its versatility in explaining both modern multimodal foundation models and traditional smaller detectors. Figure~\ref{traditional_detector_visualization} presents the visualization results, demonstrating that our method maintains high faithfulness in explaining traditional detectors.

\begin{figure}[!t]
    \centering
    \includegraphics[width=0.48\textwidth]{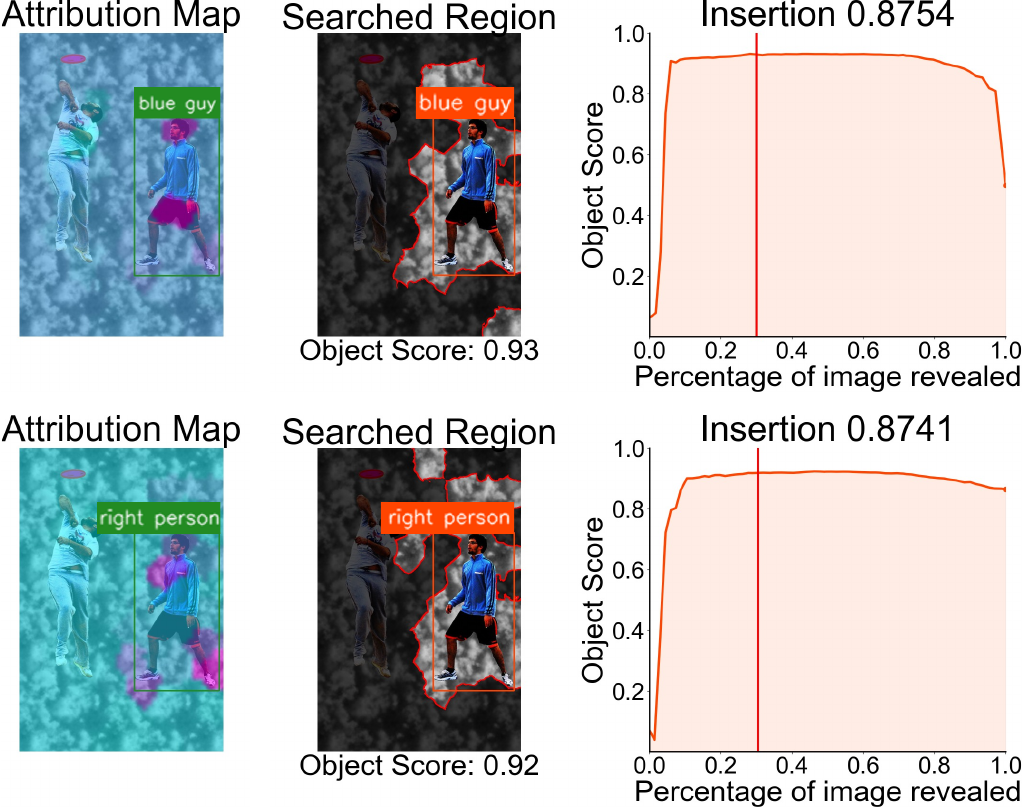}
    \caption{Interpretation of the same instance with different text expressions.} 
    \label{diverse_semantic} 
\end{figure}

\section{Semantic Interpretations}

We apply our interpretation method to the visual grounding task using Grounding DINO, focusing on explaining the same location corresponding to different text expressions. As shown in Figure~\ref{diverse_semantic}, although the important regions are similar when grounding the same object with different texts, the saliency map reveals distinct differences. For the interpretation of the text ‘blue guy’, the person wearing white clothes contributes less than the background region. In contrast, the interpretation of the ‘right person’ highlights only the correct object.

\section{Computational complexity}

Solving Eq.~\ref{problem_objective} is an $\mathcal{NP}$-hard problem, and the time complexity is $\mathcal{O}(2^{|V|})$. By employing a greedy search algorithm, we sort all subregions, resulting in a total number of inferences equal to $\frac{1}{2}|V|^2 + \frac{1}{2}|V|$, the algorithm’s time complexity is $\mathcal{O}(\frac{1}{2}|V|^2 + \frac{1}{2}|V|)$.

\section{Statistical Analysis}

We utilize Grounding DINO’s correct prediction attribution map and compute the numerical improvements of our method over the baseline for each sample. We then visualize the overall distribution of these improvements. As shown in Figure~\ref{distribution}, the distributions on the MS COCO, RefCOCO, and LVIS V1 datasets illustrate that our method outperforms the baseline on the majority of samples, demonstrating its superior performance.

\begin{figure}[!t]
    \centering
    \includegraphics[width=0.48\textwidth]{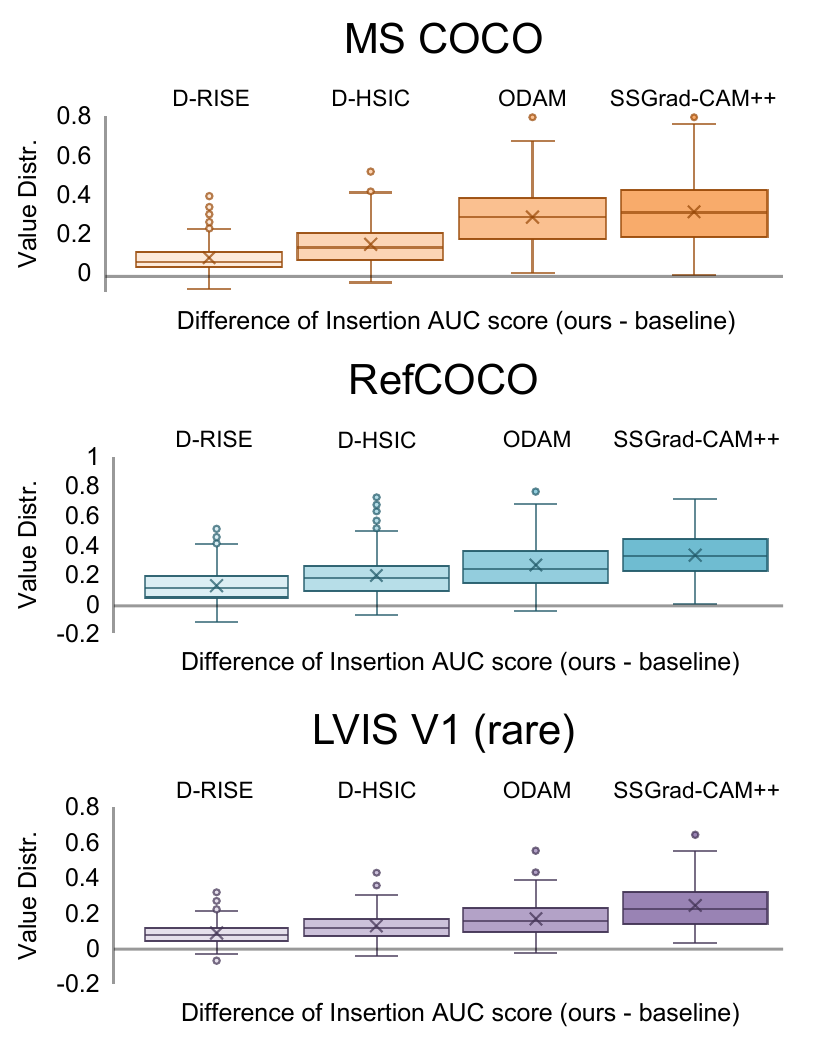}
    \caption{Distribution of our improvements over the baseline per sample on MS COCO, RefCOCO, and LVIS V1 datasets.
    } 
    \label{distribution}    
    \vspace{-16pt}
\end{figure}

\section{Evaluation Metrics}

In this paper, we adopt 6 faithfulness metrics. Given the object location box information, $\boldsymbol{b}_{\text{target}}$, and the target category, $c$, that requires explanation. In this section we will formulate a description of faithfulness metrics.


For the \textbf{Deletion AUC score}~\cite{petsiuk2021black}, which quantifies the reduction in the model’s ability of both location and classification when important regions are replaced with a baseline value. A sharp decline in performance indicates that the explanation method effectively identifies the key variables influencing the decision. Let $\mathbf{x}_{[x_{T}=x_{0}]}$ denote the input where the $T$ most important variables, according to the attribution map, are set to the baseline value $x_{0}=0$. Given a set $\mathcal{T} = \{T_0,T_1,\cdots, T_n\}$, where $T_0=0$ and $T_n$ is the input size of $\mathbf{x}$, this set represents the selected numbers of the most important regions. Then, the Deletion AUC score is given by:
\begin{equation}
    \begin{aligned}
        &\text{Del.} = \\
        &\sum_{i=1}^{n} \frac{\left (
        s_{\text{clue}}(\mathbf{x}_{[\mathbf{x}_{T_i}=x_{0}]})+s_{\text{clue}}(\mathbf{x}_{[\mathbf{x}_{T_{i-1}}=x_{0}]}) \right )\cdot \left ( T_{i}-T_{i-1}\right )}{2T_n},
    \end{aligned}
\end{equation}
the lower this metric, the better the attribution performance.

For the \textbf{Insertion AUC score}~\cite{petsiuk2021black}, which quantifies the increase in the model’s output as important regions are progressively revealed. This metric is defined as follows:
\begin{equation}
    \begin{aligned}
        &\text{Ins.} = \\
        &\sum_{i=1}^{n} \frac{\left (
        s_{\text{clue}}(\mathbf{x}_{[\mathbf{x}_{\bar{T}_i}=x_{0}]})+s_{\text{clue}}(\mathbf{x}_{[\mathbf{x}_{\bar{T}_{i-1}}=x_{0}]}) \right )\cdot \left ( T_{i}-T_{i-1}\right )}{2T_n},
    \end{aligned}
\end{equation}
where $\mathbf{x}_{[x_{\bar{T}}=x_0]}$ denotes the input where elements not belonging to the set $T$ are set to the baseline value $x_0=0$. The higher this metric, the better the attribution performance.

For the \textbf{Deletion AUC score (class)}, we first define $s_{cls}$, whose goal is to select the category score of the bounding box that is close to the explanation target:
\begin{equation}
    s_{cls}(S) = \arg\max_{s_{c,i}\in f(S)}{\text{IoU}(\boldsymbol{b}_{\text{target}},\boldsymbol{b}_i)\cdot s_{c,i}},
\end{equation}
then,
\begin{equation}
    \begin{aligned}
        &\text{Del. (class)} = \\
        &\sum_{i=1}^{n} \frac{\left (
        s_{cls}(\mathbf{x}_{[\mathbf{x}_{T_i}=x_{0}]})+s_{cls}(\mathbf{x}_{[\mathbf{x}_{T_{i-1}}=x_{0}]}) \right )\cdot \left ( T_{i}-T_{i-1}\right )}{2T_n}.
    \end{aligned}
\end{equation}

Similar, for the \textbf{Insertion AUC score (class)}, 
\begin{equation}
    \begin{aligned}
        &\text{Ins. (class)} = \\
        &\sum_{i=1}^{n} \frac{\left (
        s_{cls}(\mathbf{x}_{[\mathbf{x}_{\bar{T}_i}=x_{0}]})+s_{cls}(\mathbf{x}_{[\mathbf{x}_{\bar{T}_{i-1}}=x_{0}]}) \right )\cdot \left ( T_{i}-T_{i-1}\right )}{2T_n}.
    \end{aligned}
\end{equation}

For the \textbf{Deletion AUC score (IoU)}, we first define $s_{iou}$, whose goal is to select the IoU score of the bounding box that is close to the explanation target:
\begin{equation}
    s_{iou}(S) =\text{IoU}\left( \arg\max_{\boldsymbol{b}_i\in f(S)}{\text{IoU}(\boldsymbol{b}_{\text{target}},\boldsymbol{b}_i)\cdot s_{c,i}}, \boldsymbol{b}_{\text{target}}\right),
\end{equation}
then,
\begin{equation}
    \begin{aligned}
        &\text{Del. (IoU)} = \\
        &\sum_{i=1}^{n} \frac{\left (
        s_{iou}(\mathbf{x}_{[\mathbf{x}_{T_i}=x_{0}]})+s_{iou}(\mathbf{x}_{[\mathbf{x}_{T_{i-1}}=x_{0}]}) \right )\cdot \left ( T_{i}-T_{i-1}\right )}{2T_n}.
    \end{aligned}
\end{equation}

Similar, for the \textbf{Insertion AUC score (IoU)}, 
\begin{equation}
    \begin{aligned}
        &\text{Ins. (IoU)} = \\
        &\sum_{i=1}^{n} \frac{\left (
        s_{iou}(\mathbf{x}_{[\mathbf{x}_{\bar{T}_i}=x_{0}]})+s_{iou}(\mathbf{x}_{[\mathbf{x}_{\bar{T}_{i-1}}=x_{0}]}) \right )\cdot \left ( T_{i}-T_{i-1}\right )}{2T_n}.
    \end{aligned}
\end{equation}

\section{Limitation and Discussion}

\textbf{Limitations:} The main limitation of our method is that (i) \textbf{\textit{Sparse division}} impacts attribution faithfulness. When sub-regions mix positively and negatively contributing regions, attribution direction may be distorted. Refining sparse division strategies for different scenarios remains a region for improvement. (ii) A large number of sub-regions poses a challenge for \textbf{\textit{attribution time}}, as greedy search remains computationally demanding. Enhancing search efficiency or integrating external knowledge to filter unnecessary sub-regions can help accelerate attribution.


\textbf{Why object score decrease sometimes:} This phenomenon occurs because not all sub-regions contribute positively to the model’s decision, underscoring a key advantage of our method: maximizing the decision response with fewest sub-regions. Exposing additional sub-regions may lower the object score, revealing that the remaining regions negatively impact the decision. This effect is more pronounced in incorrect decisions, where certain regions may cause errors. If such regions are excluded, the decision could potentially be corrected.

\textbf{Future outlook:} Our method primarily focuses on interpretable attribution at the input level of the model. There remains significant potential for attributing internal parameters, particularly in transformer-based models. This approach could extend to explaining additional tasks, such as instance segmentation. Future research could explore improving models based on this mechanism by identifying and correcting problematic parameters.

\section{Actual Application} 

Attribution has numerous potential applications. Beyond aiding human understanding of model decisions, it can also help identify the causes of errors, enabling the analysis of potential hallucinations~\cite{chen2024less}. Some studies explore using attribution to guide model training and enhance performance~\cite{gao2024going}, while others investigate detecting anomaly decisions by assessing whether the attribution distribution deviates from expected patterns during deployment~\cite{stocco2022thirdeye,shu2024informer}. These diverse applications highlight the significant research value of attribution methods.

\section{Additional Ablation}

\textbf{Ablation on thresholding the confidence scores:} We discuss the impact of applying a confidence score threshold versus using all model predictions regardless of their confidence scores. As shown in Table~\ref{ablation_threshold}, applying a threshold to filter boxes leads to a consistent decline in all faithfulness metrics as the threshold increases. Therefore, we recommend avoiding the use of thresholds.

\begin{table*}[!t]
    \caption{Ablation on the confidence score threshold for Grounding DINO using the MS COCO validation set.}
    \label{ablation}
    \begin{center}
    \resizebox{0.8\textwidth}{!}{
        \begin{tabular}{c|ccccccc}
        \toprule
        \multirow{2}{*}{Threshold} & \multicolumn{7}{c}{Faithfulness Metrics}  \\ 
         & Ins. ($\uparrow$)  & Del. ($\downarrow$) & Ins. (class) ($\uparrow$)  & Del. (class) ($\downarrow$) & Ins. (IoU) ($\uparrow$)  & Del. (IoU) ($\downarrow$) & Ave. high. score ($\uparrow$) \\ \midrule
        None  & \textbf{0.5459} & \textbf{0.0375} & \textbf{0.6204} & \textbf{0.0882} & \textbf{0.8581} & \textbf{0.3300} & \textbf{0.6873} \\
        0.1  & 0.5267 & 0.0396 & 0.6007 & 0.0896 & 0.8498 & 0.3321 & 0.6660 \\
        0.2  & 0.5165 & 0.0423 & 0.5928 & 0.0916 & 0.8373 & 0.3408 & 0.6625 \\
        0.35  & 0.4862 & 0.0641 & 0.5638 & 0.1098 & 0.8023 & 0.3825 & 0.6519 \\  \bottomrule
        \end{tabular}
    }
    \end{center}
    \label{ablation_threshold}
\end{table*}

\textbf{Ablation on using confidence score}: We discuss the impact of whether or not to use the Confidence score on the interpretation. In Table~\ref{ablation_score}, without conf. score, both the attribution faithfulness for classes and the location will decrease, leading to imprecise attribution.

\begin{table*}[!t]
    \caption{Ablation on the confidence score for Grounding DINO using the MS COCO validation set.}
    \label{ablation}
    \begin{center}
    \resizebox{0.8\textwidth}{!}{
        \begin{tabular}{c|ccccccc}
        \toprule
        \multirow{2}{*}{Submodular function} & \multicolumn{7}{c}{Faithfulness Metrics}  \\ 
         & Ins. ($\uparrow$)  & Del. ($\downarrow$) & Ins. (class) ($\uparrow$)  & Del. (class)($\downarrow$) & Ins. (IoU) ($\uparrow$)  & Del. (IoU)($\downarrow$) & Ave. high. score ($\uparrow$) \\ \midrule
        w/ conf. score  & \textbf{0.5459} & \textbf{0.0375} & \textbf{0.6204} & \textbf{0.0882} & \textbf{0.8581} & \textbf{0.3300} & \textbf{0.6873} \\
        w/o conf. score  & 0.3725 & 0.0917 & 0.4410 & 0.1622 & 0.8051 & 0.3421 & 0.5928 \\ \bottomrule
        \end{tabular}
    }
    \end{center}
    \label{ablation_score}
\end{table*}

\section{More Visualization}

We present additional attribution visualizations for samples correctly predicted by Grounding DINO, including results from the MS COCO dataset to explain the object detection task (Figure~\ref{more_coco}) and from the RefCOCO dataset to illustrate the visual grounding task (Figure~\ref{more_refcoco}).

\begin{figure*}[!t]
    \centering
    \includegraphics[width=\textwidth]{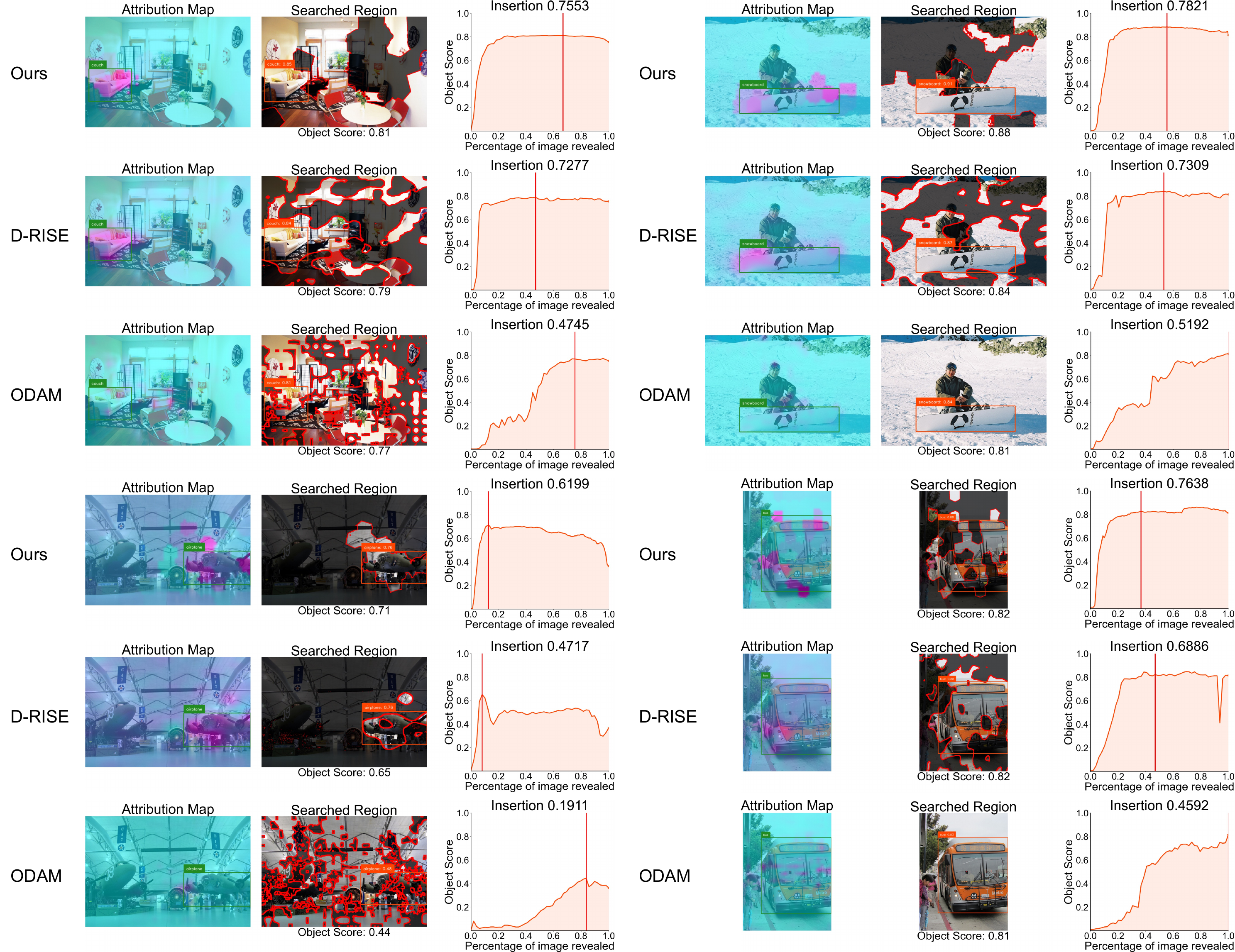}
    \caption{More visualization results of Grounding DINO for interpreting object detection task on the MS COCO dataset.} 
    \label{more_coco} 
\end{figure*}

\begin{figure*}[!t]
    \centering
    \includegraphics[width=\textwidth]{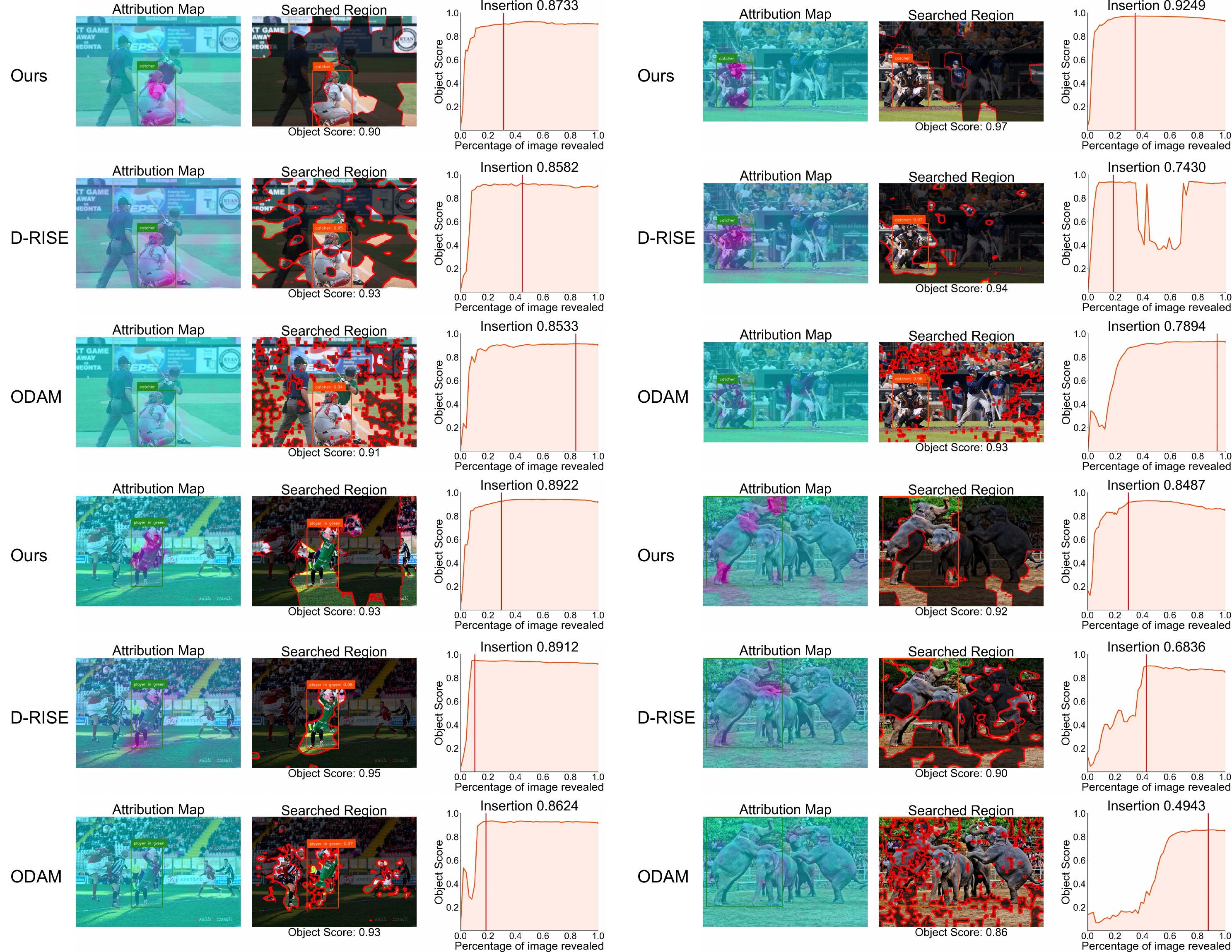}
    \caption{More visualization results of Grounding DINO for interpreting visual grounding task on the RefCOCO dataset.} 
    \label{more_refcoco} 
\end{figure*}

\end{document}